\theoremstyle{plain}
\newtheorem{proposition}{Proposition}[section]
\newtheorem{fact}{Fact}[section]
\theoremstyle{definition}
\newtheorem{definition}{Definition}[section]
\theoremstyle{remark}
\crefname{algocfline}{Algorithm}{Algorithms}
\Crefname{algocfline}{Algorithm}{Algorithms}
\DeclareMathOperator{\nt}{\zeta}
\DeclareMathOperator{\s}{\sigma}
\DeclareMathOperator{\D}{\Delta}
\DeclareMathOperator{\mh}{\hat{\mu}}
\DeclareMathOperator{\Ut}{\mathcal{U}_t}
\DeclareMathOperator{\Vt}{\mathcal{A}_t}
\DeclareMathOperator{\Wt}{\mathcal{B}_t}
\DeclareMathOperator{\Yt}{\mathcal{E}_t}
\DeclareMathOperator{\sm}{\bar{\sigma}}
\DeclareMathOperator{\Feedback}{\mathbf{\Sigma}}
\definecolor{ashgrey}{rgb}{0.7, 0.75, 0.71}
\newcommand\event[1]{\mathop{\mathbb{I}\left(#1\right)}}
\newcommand\Ex[2]{\mathop{\underset{#1}{\mathbb{E}}\left[#2\right]}}
\title{Asymptotically-Optimal Gaussian Bandits with Side Observations}
\author{Alexia Atsidakou* \\ \textit{\small{Department of Electrical and Computer Engineering}} \\ \textit{\small{University of Texas at Austin}}  \and Orestis Papadigenopoulos*\\ \textit{\small{Department of Computer Science}}\\ \textit{\small{University of Texas at Austin}} \and
Constantine Caramanis \\ \textit{\small{Department of Electrical and Computer Engineering}} \\ \textit{\small{University of Texas at Austin}}  \and Sujay Sanghavi \\ \textit{\small{Department of Electrical and Computer Engineering}}\\ \textit{\small{University of Texas at Austin}} 
\and Sanjay Shakkottai \\ \textit{\small{Department of Electrical and Computer Engineering}}\\ \textit{\small{University of Texas at Austin}}
}
\date{}
\begin{document}

\maketitle

\def\thefootnote{*}\footnotetext{These authors contributed equally. Corresponding author: Alexia Atsidakou, atsidakou@utexas.edu\\ 
\textit{Proceedings of the $39$th International Conference on Machine Learning, Baltimore, Maryland, USA, PMLR 162, 2022. Copyright 2022
by the author(s).}}

\vskip 0.3in

\begin{abstract}
We study the problem of Gaussian bandits with general side information, as first introduced by Wu, Szepesv\'{a}ri, and Gy\"{o}rgy. In this setting, the play of an arm reveals information about other arms, according to an arbitrary {\em a priori} known {\em side information} matrix: each element of this matrix encodes the fidelity of the information that the ``row'' arm reveals about the ``column'' arm. 
In the case of Gaussian noise, this model subsumes standard bandits, full-feedback, and graph-structured feedback as special cases. 
In this work, we first construct an LP-based asymptotic instance-dependent lower bound on the regret. The LP optimizes the cost (regret) required to reliably estimate the suboptimality gap of each arm. This LP lower bound motivates our main contribution: the first known asymptotically optimal algorithm for this general setting.
\end{abstract}


\section{Introduction}
\label{section:Introduction}
In the stochastic online learning framework, a player sequentially selects from a set of available actions (or ``arms'') and collects a stochastic reward associated with the chosen action. Under the objective of maximizing the (expected) cumulative reward collected over a number of rounds, the ``complexity'' of an instance is greatly impacted by the nature of the feedback the player receives at the end of each round. 
In the {\em full-feedback} case, for instance, where the player observes the realized rewards of all actions, the most reasonable (and provably optimal) strategy is to greedily select at each round the action with maximum estimated mean reward computed using the observed samples.
In the {\em bandit-feedback} case \cite{lai19854,Bubeck2012}, on the other hand, where only the reward of the chosen action is revealed to the player, more sophisticated ideas have to be leveraged in order to align the two conflicting objectives of the problem: {\em exploration} (learning the best action) and {\em exploitation} (not playing suboptimal actions).

In a number of real world applications, however, the feedback does not fall into any of the above extreme categories, since an action can potentially leak information regarding, not only its own reward distribution, but also the distributions of other actions. Drawing an example from music recommendation, the fact that a user likes a specific recommended song (corresponding to an action chosen by the platform) can potentially imply that similar songs (e.g., of the same artist or genre) might also be appreciated by the user. In another example, encouraging a certain user of a social network to advertise a purchased product (potentially by offering a discount as part of a promotion campaign), the seller can obtain valuable information about other neighboring users by recording their reactions.

Motivated by such practical scenarios, researchers have focused their attention on online learning models with richer information structures -- namely, feedback which interpolates between full and bandit \cite{Mannor2011,Caron2012,lin2014,yun2018,Cortes2020}. While the presence of side observations permits improved regret guarantees compared to those of standard multi-armed bandits, techniques and algorithms that have been proved successful for bandit-feedback (e.g., optimism principle and the UCB method) fail to achieve optimality. Indeed, enriching the feedback model severely perplexes the role of exploration and exploitation. 
To illustrate, one can think of the following example: $K-1$ arms provide to the player standard bandit feedback. In addition to these, there also exists an ``information-revealing'' arm with (deterministically) zero reward, such that, once played, the player gets full-feedback on the realized rewards of the round, for all arms. Even in the above simple example, the number of times (if any) this ``information-revealing'' arm should played in a minimum regret solution is now a complex function of $K$ and the (a priori unknown) suboptimality gaps.

Initiated by the work of \cite{Mannor2011} in the context of adversarial bandits, the majority of works in this field focuses on the so-called {\em graph-structured feedback} model \cite{noga2015, wu2015, kocak2016, Arora2019, Cortes2020}. There, each arm corresponds to a node of a given (directed) graph and every time the player chooses an action, she observes the reward realization of the arm played, but also the realizations of all the adjacent arms in the graph. In the above setting, a series of works  \cite{buccapatnam2014, noga2015,cohen2016, Lykouris2020} has provided algorithms with regret guarantees that no longer depend on the number of arms, but on smaller inherent characteristics of the feedback-graph (e.g., clique or independence number). 

One of the most general feedback models that has been introduced in the literature is due to Wu, Szepesv\'{a}ri, and Gy\"{o}rgy \cite{wu2015}. According to their model, playing an arm reveals noisy information about the reward of all other arms according to an arbitrary a priori known {\em side information} matrix. Specifically, each element of this square matrix encodes the fidelity of the information -- expressed in terms of standard deviation of the noise -- that the ``row" arm reveals about the ``column" arm. Notice that, modulo the Gaussian noise assumption made in \cite{wu2015}, the above model subsumes the full, bandit, and graph-structured feedback as special cases. 
For the case of Gaussian noise, \cite{wu2015} provide finite-time instance-dependent lower bounds on the regret of any algorithm. In terms of upper bounds, the authors only address the special case where each entry of the feedback matrix satisfies $\sigma_{i,j} \in \{\sigma, \infty\}$ for some fixed $\sigma$ (which is essentially the graph-structured feedback model), but they leave open the question of an algorithm for general feedback matrices.

\subsection{Our Contributions}
\label{subsection:Contribution}

In our work, we provide the first algorithm and regret analysis for the setting of general feedback matrices proposed by \cite{wu2015}. 
As we show, the regret guarantee of our algorithm is asymptotically optimal, as it matches our asymptotic instance-dependent lower bound. We now outline the main challenges and technical contributions of this work. 

\textbf{Concentration bounds for a natural ML estimator.} In the restricted case where the entries of the feedback matrix are in $\{\sigma, \infty\}$, the natural estimator of the mean reward of an arm is the sample average of the (finite-variance) samples collected. In our algorithm, where the collected samples are subject to different noise levels, we replace the above estimator by the weighted average of the collected samples, where the weight of each sample is the inverse variance of the noise of its source. The above maximum-likelihood estimator suggests that the notion of ``number of samples'', as a measure of the amount of information collected, needs to be replaced by the {\em weighted number of samples}.

The use of weighted average as an estimator introduces technical hurdles: in scenarios where the number of samples is a random variable that depends on the trajectory of the algorithm, one needs to apply a union bound over all possible sample numbers in order to decorrelate the estimate from the evolution of the algorithm up to each round. However, in our case where each sample can be collected under $K$ different noise levels, the weighted number of samples of an arm can take exponentially many different values -- a fact that invalidates the use of union bound. In \cref{section:Estimator}, we show how to overcome the above issue by proving concentration guarantees for our estimator, which extend to the more general case of sub-Gaussian noise.

\textbf{Asymptotic regret lower bound.} In our setting, a minimum regret arm-pulling schedule is a complicated function of the number of arms, the suboptimality gaps, and the feedback matrix. In the imaginary scenario where prior knowledge of the suboptimality gaps is assumed, the underlying (combinatorial) problem is to collect sufficient information in order to distinguish the optimal arm by paying the minimum cost (i.e., total suboptimality gap of the arms played for exploration). Although this problem is computationally hard, it can be closely approximated (up to rounding errors) and relaxed by a linear program (LP). As we prove in \cref{section:Asymptotic}, since the above approximation becomes tighter as the time horizon increases, the exact same LP provides a clear and intuitive asymptotic lower bound on the regret. 

\textbf{Asymptotically optimal algorithm.} The LP-based asymptotic regret lower bound that we construct serves as a starting point for the design of our asymptotically optimal algorithm. Since the suboptimality gaps are initially unknown, the role of our algorithm is now twofold: to estimate the LP (including constraints and objective) up to a sufficient degree, while simultaneously to implement its optimal solution in an online manner. In order to achieve the above dual objective, our algorithm interleaves rounds of pure greedy exploitation, when sufficient information (with respect to the estimated LP) has been collected, and exploration rounds. The latter case is further partitioned into rounds where, by greedily collecting high fidelity samples, the algorithm attempts to uniformly estimate the LP, and more refined exploration rounds that are dictated by its solution. In \cref{section:Algorithm}, we describe our algorithm and prove its asymptotic optimality.

Due to space constraints, all omitted proofs have been moved to the Appendix. 

\subsection{Related Work}
\label{subsection:Related}

The problem of bandits with graph-structured feedback was introduced by \cite{Mannor2011} in an adversarial setting. Their model naturally interpolates between experts, where the learner observes feedback from all actions, and bandits, where the learner receives feedback only from the action selected at each round. Since then, the graph feedback model has been extensively studied in both adversarial and stochastic settings.

In the stochastic case, the work of \cite{Caron2012} presents a $\Omega(\log(T))$ regret lower bound for the graph feedback setting, as long as not all suboptimal arms have a maximum-reward neighbor. 
In addition, \cite{Caron2012} examine the regret of natural UCB variants that include the side observations in the UCB indices. They provide instance-dependent regret guarantees of worst-case order $\mathcal{O}\left(\chi\cdot \frac{\D_{\max}}{\D_{\min}^2}\cdot\log(T)\right)$, where $\chi$ is the \textit{clique-partition number} of the graph (i.e. the minimum number of cliques into which the graph can be partitioned). 
\cite{buccapatnam2014} provide an asymptotic instance-dependent LP lower bound for the graph-structured feedback model. 
The authors also propose two algorithms which exploit (a relaxation of) the minimum \textit{dominating set} of the graph (i.e., the smallest set of nodes which, in terms of their one-step neighbors, covers all arms) in order to perform more efficient exploration.
Their gap-dependent regret guarantees are of the form $\mathcal{O}\left(\sum_{i\in D} \frac{\log(T)}{\D_i}\right)$, where $D$ is a special \textit{dominating set} of the graph. Similar guarantees for UCB and Thomson Sampling are provided by \cite{Lykouris2020} in terms of the \textit{independence number} of the feedback graph (i.e. the size of the maximum independent set). 

In principle, it seems that the absence of initial knowledge of the suboptimality gaps prevents 
the above approaches from achieving optimal regret guarantees.
The work of \cite{wu2015} is the first to resolve the above issue and achieve optimal instance-dependent regret. Focusing on the Gaussian setting, they present an algorithm that attempts to estimate the gaps as well as satisfy the constraints of the asymptotic instance-dependent LP lower bound due to \cite{buccapatnam2014}. 
They also present a minimax optimal algorithm for this setting. 
In addition, \cite{wu2015} generalize the graph-structured feedback model to arbitrary feedback matrices, for which they present finite-time gap-dependent lower bounds. 

The graph-structured feedback model has also been extensively studied in an adversarial setting. \cite{Mannor2011} present algorithms with regret guarantees of the form $\mathcal{O}(\sqrt{\alpha\cdot \log(k) \cdot T})$, for
undirected, and $\mathcal{O}(\sqrt{\chi \cdot\log(k) \cdot T})$ for directed graphs, together with ${\Omega}(\sqrt{\alpha \cdot T})$ lower bounds. \cite{noga2013} provide improved results for the directed feedback graph case. In \cite{noga2015}, the authors investigate the relation between the observability properties of the graph and the optimal achievable regret. \cite{kocak2016} generalize the graph feedback to weighted graph and provide $\mathcal{\Tilde{O}}(\sqrt{\alpha^* \cdot T})$ guarantees, where $\alpha^*$ is a generalization of the independence number for weighted graphs and the $\tilde{\mathcal{O}}(\cdot)$ notation is used to hide logarithmic terms. Other variants of adversarial online learning with graph-structured feedback have been studied in \cite{Rangi2019,Cortes2019,Arora2019,Resler2019,Cortes2020}. 

Finally, additional examples of bandits that incorporate graph-structured feedback include the work of \cite{Liu2018,Liu2018b}, where the authors study the Bayesian version of the problem and provide $\mathcal{O}(\sqrt{\alpha\cdot T \cdot\log(K)})$  regret guarantees for time-varying graphs. In \cite{Liu2018} the authors employ information directed sampling for this setting. \cite{Li2020} extend the graph-structured feedback by adding observation probabilities on the edges. \cite{cohen2016} study the case where the feedback graph is never fully revealed to the learner. Other variations of the problem include \cite{yun2018,singh2020, wang2020}, and the so-called partial monitoring setting \cite{lin2014,Bartok2014,Hanawal2016}.

\section{Problem Definition}\label{section:Model}

\paragraph{Model.} We consider the variant of the stochastic multi-armed bandit problem where the player is given $K$ arms with (unknown) expected rewards $\mu = (\mu_1,\ldots,\mu_K)$ and a feedback matrix $\Feedback = (\sigma_{i,j})_{i,j\in[K]}$. By playing an action $i \in [K]$ the player observes a noisy sample $X_{j}$ from the reward of each arm $j \in [K]$, distributed independently as $X_j \sim \mathcal{N}(\mu_j,\sigma_{i,j}^2)$, and collects the realized value $X_i \sim \mathcal{N}(\mu_i,\sigma_{i,i}^2)$. Therefore, the matrix $\Feedback$ quantifies the quality of the noisy observations of the expected rewards in terms of standard deviation of the Gaussian noise. At any round $t$ where arm $i_t$ is played, we denote by $X_{j,t}$ the noisy observation for each arm $j \in [K]$, which coincides with the realized collected reward in the case where $j = i_t$. The objective is to minimize the expected cumulative regret over an unknown time horizon $T$, defined as 
\begin{align*}
    R_T(\mu) = T \cdot \mu^*  -  \Ex{}{\sum_{t\in[T]} X_{i_t,t}}, 
\end{align*}
where $\mu^* = \max_{i\in[K]}\mu_i$ is the maximum expected reward.

We remark that we do not impose any non-trivial restrictions on the feedback matrix $\Feedback \in \mathbb{R}^{k \times k}_{\geq 0}$. In particular, we allow $\Feedback$ to be asymmetric, i.e., $\sigma_{i,j} \neq \sigma_{j,i}$ for $i \neq j$, and we permit that $\sigma_{j,i} < \sigma_{i,i}$, namely, higher quality information about the reward of an action can potentially be obtained by playing a different action. Finally, $\s_{i,j} = \infty$ corresponds to the case where pulling arm $i$ reveals no information about the reward of arm $j$. 

\textbf{Technical notation.} We denote by $i^*(\mu)$ the maximum expected reward arm of vector $\mu$ and by $\mu^*$ its expected reward. In the case of more than one optimal arms, we choose $i^*(\mu)$ to be the optimal arm of smallest index. For any vector $\mu$, the suboptimality gap of arm $i\in[K]$ is defined as $\D_i(\mu)=\mu^* - \mu_i$. We define the minimum and maximum suboptimality gaps as $\D_{\min}(\mu) = \min_{i:\D_i(\mu)>0}\D_i(\mu)$ and $\D_{\max}(\mu)=\max_{i\in[K]}\D_i(\mu)$, respectively. For brevity, we simply use $\D_{\max}$ instead of $\D_{\max}(\mu)$ when $\mu$ is the actual mean reward vector.

For any feedback-matrix $\Feedback$, we define $\sigma^{\min}_i = \min_{j \in [K]} \sigma_{j,i}$ to be the minimum standard deviation of the noise under which we can obtain information about $\mu_i$. Note that, although the condition $\sigma^{\min}_i < \infty$ for any $i \in [K]$ is essential for an instance to be identifiable (indeed, if $\sigma^{\min}_i = \infty$ for some $i \in [K]$, then it is impossible to estimate $\mu_i$), we do not explicitly make this assumption, given that the dependence on $\{\sigma^{\min}_i\}_{i \in [K]}$ appears naturally in our results. Finally, we define $\sm = \max_{i \in [K]} \sigma^{\min}_i$ to be the maximum $\sigma^{\min}_i$ over all arms.

At any round $t$, we denote by $N_i(t)$ the number of times arm $i$ has been pulled up to and including round $t-1$. Finally, we define $\nt_i(t) = \sum_{j \in [K]} N_j(t) / \sigma^2_{j,i}$ to be the \textit{weighted number of samples} corresponding to arm $i \in [K]$ at the beginning of round $t$.

\section{Maximum-Likelihood Estimator and Concentration Bounds} 
\label{section:Estimator}
By definition of our model, every time an arm $j \in [K]$ is played at some round $\tau$, for each arm $i \in [K]$, the player observes a sample $X_{i,\tau}$ drawn independently from $\mathcal{N}(\mu_i,\sigma_{j,i}^2)$. From the perspective of a fixed arm $i \in [K]$, the player collects, at each round, a noisy estimate of $\mu_i$ under various (Gaussian) noise levels. In particular, the noise variance depends on the played arm, which in turn is a function of the trajectory of the algorithm up to that round. The results of the following sections rely on a natural maximum-likelihood (ML) estimator for the mean of samples from heterogeneous sources of identical means and different standard deviations, given by matrix $\Feedback$. In this section, we define this estimator and prove useful concentration properties. 
For brevity, for the rest of this section we fix an arm $i \in [K]$ and drop any reference to it. 

The ML estimator for the mean of any arm at the beginning of round $t$ (namely, using $t-1$ samples) is defined as follows:
\begin{align}\label{eq:estimator}
    \mh(t)  &= \sum_{\tau=1}^{t-1} \frac{X_{\tau}}{\s_{i_\tau}^2}  \left(\sum_{\tau=1}^{t-1} \frac{1}{\s_{i_\tau}^2}\right)^{-1} =\sum_{j\in[K]} \sum_{\tau=1}^{t-1} \frac{X_{\tau} \cdot \event{i_\tau=j}}{\s_{j}^2}   \nt(t)^{-1}, 
\end{align}
where, again, we use the definition of the weighted number of samples $\nt(t) = \sum_{j\in[K]} \frac{N_j(t)}{\s_{j}^2}$. 

Observe that the above estimator is a weighted average of $t-1$ samples coming from $K$ possible different sources (namely, arms played), where each sample is weighted by the inverse variance of its source. Here, the weighted number of samples $\nt(t)$ reflects the total ``quality'' of the information collected up to time $t$, and generalizes the ``number of collected samples'' -- a critical notion in the standard concentration results used in multi-armed bandits.

In the case where the number of samples from each type of noise distribution is fixed, the following guarantee can be trivially proved:  

\begin{fact}\label{rem:concentration} Assuming that the trajectory of the algorithm (namely, the type of sources) up to and including time $t-1$ is fixed and independent of the observed samples -- thus, $\nt(t)$ is deterministic -- the following inequality is true for any $\epsilon>0$ and $i \in [K]$:
\begin{align*}
    \Pr[|\mh(t) - \mu| > \epsilon]
    < 2 \cdot \exp{\left(-\frac{\nt(t) \cdot \epsilon^2}{2}\right)}.
\end{align*} 
\end{fact}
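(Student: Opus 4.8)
The plan is to use that, once the trajectory $i_1,\dots,i_{t-1}$ is frozen and independent of the samples, the estimator $\mh(t)$ is a deterministic convex combination of independent Gaussians. First I would rewrite $\mh(t) - \mu = \sum_{\tau=1}^{t-1} w_\tau\,(X_\tau - \mu)$, where $w_\tau = \s_{i_\tau}^{-2}\,\nt(t)^{-1}$, and note that $\sum_{\tau=1}^{t-1} w_\tau = 1$ directly from the definition of $\nt(t)$; in particular $\mh(t)$ is unbiased. Since the trajectory is fixed, the model guarantees that the increments $X_\tau - \mu \sim \mathcal{N}(0,\s_{i_\tau}^2)$ are mutually independent, so $\mh(t) - \mu$ is a zero-mean Gaussian.

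The only computation that matters is the variance: $\mathrm{Var}(\mh(t)) = \sum_{\tau=1}^{t-1} w_\tau^2\,\s_{i_\tau}^2 = \nt(t)^{-2}\sum_{\tau=1}^{t-1}\s_{i_\tau}^{-4}\,\s_{i_\tau}^2 = \nt(t)^{-2}\sum_{\tau=1}^{t-1}\s_{i_\tau}^{-2} = \nt(t)^{-1}$, where the last equality is exactly the definition of the weighted number of samples. Hence $\mh(t) - \mu \sim \mathcal{N}(0,\,1/\nt(t))$, and the claimed inequality is precisely the standard Gaussian tail bound $\Pro{|Z| > \epsilon} < 2\exp(-\epsilon^2/(2 s^2))$ applied with variance $s^2 = 1/\nt(t)$. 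The sub-Gaussian extension alluded to in the text needs no new idea: if each $X_\tau - \mu$ is only $\s_{i_\tau}$-sub-Gaussian, then $\sum_\tau w_\tau(X_\tau - \mu)$ is sub-Gaussian with parameter $\sqrt{\sum_\tau w_\tau^2\,\s_{i_\tau}^2} = \sqrt{1/\nt(t)}$, and the sub-Gaussian Chernoff bound gives the identical tail.

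There is essentially no obstacle here, which is why this is stated as a fact rather than a lemma: assuming the trajectory is fixed and independent of the observations is exactly what makes the $X_\tau$ independent and $\nt(t)$ deterministic, so no union bound over the (exponentially many) possible values of $\nt(t)$ is needed. The only point requiring care is careful bookkeeping of that independence assumption; removing it -- when $i_\tau$ may depend on past samples and $\nt(t)$ becomes a random variable -- is precisely the technical content of the remainder of \cref{section:Estimator}, and is not invoked for this statement.
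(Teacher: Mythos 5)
Your proof is correct and is exactly the standard argument the paper has in mind (the Fact is stated without proof as trivial): with the trajectory frozen, $\mh(t)-\mu$ is a deterministic convex combination of independent centered Gaussians with variance $\nt(t)^{-1}$, and the bound is the Gaussian (or sub-Gaussian Chernoff) tail. Your bookkeeping of the weights, the variance computation, and the remark on why no union bound over values of $\nt(t)$ is needed all match the paper's intent.
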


In scenarios of active sampling, however, where the actions depend on the history of observed rewards, the quantity $\nt(t)$ is also a random variable depending on the trajectory up to time $t-1$ -- a fact that invalidates the use of the above standard concentration result. 
\paragraph{Remark.} A usual approach in such settings is to apply a union bound over all possible numbers of samples, bypassing in that way the dependence between the trajectory and the observed samples. However, given that in our case $\nt(t)$ can take $\binom{t+K-2}{K-1}$ different values (since the reward of each arm can be sampled in $K$ different ways at each round), this approach would result in an additional $\mathcal{O}(t^K)$-factor with an undesirable exponential dependence on $K$ in the bound of \cref{rem:concentration}. 

In order to overcome the above issue, we first prove an auxiliary result, which includes concentration bounds for active sampling related to the estimator in \cref{eq:estimator}. 

\begin{restatable}{lemma}{lemmaSupermartingale}\label{lemma:martingale}
Let $\{Z_{t'}\}_{t' \in \mathbb{N}}$ be a sequence of random variables. We denote by $\mathcal{F}_{t'}$ the $\sigma$-algebra generated by $\{Z_{\tau}\}_{\tau \leq t'}$ and by $\mathcal{F} = (\mathcal{F}_{t'})_{{t'} \in \mathbb{N}}$ the corresponding filtration. Each random variable $Z_{t'}$ is drawn independently from a zero-mean sub-Gaussian distribution with variance proxy $\sigma_{t'}^2$, where $\sigma_{t'}$ is an $\mathcal{F}_{t'-1}$-measurable random variable. We define $W_{t'} = \sum^{t'}_{\tau = 1} \frac{Z_{\tau}}{\sigma_{\tau}^2}$ and $\nt_{t'} = \sum^{t'}_{\tau = 1} \frac{1}{\sigma^2_{\tau}}$. 
\begin{enumerate}
    \item[\textbf{(a)}] Let $\phi$ be an $\mathcal{F}$-stopping time which satisfies either $\nt_{\phi} \in I$ for some interval $I = [L , H]$ with $H>L>0$, or $\phi = t+1$. Then, we have that
\begin{align*}
    \Pr\bigg[\left| W_{\phi} \right| > \sqrt{2\alpha \nt_{\phi} \log{t}} \text{ and }\phi \leq t \bigg] \leq 2 \cdot t^{- \alpha {L}/{H}}.
\end{align*}
    
    \item[\textbf{(b)}] Let $\psi$ be an $\mathcal{F}$-stopping time which satisfies either $\nt_\psi \geq r$ for some $r \in \mathbb{R}_{\geq 0}$, or $\psi = t + 1$. Then, for any $\epsilon>0$, we have that
\begin{align*}
    \Pr\bigg[|W_{\psi}| > \nt_\psi \epsilon \text{ and }\psi \leq t \bigg] \leq 2 \cdot  \exp\left(-
    \frac{r \cdot \epsilon^2}{2} \right).
\end{align*}
\end{enumerate}
\end{restatable}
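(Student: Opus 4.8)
The plan is to control the process via an exponential supermartingale and then apply Markov's inequality at the relevant stopping time. For $\lambda \in \mathbb{R}$, define
\[
M_{t'}(\lambda) \;=\; \exp\!\Big(\lambda W_{t'} - \tfrac{\lambda^2}{2}\,\nt_{t'}\Big),\qquad M_0(\lambda)=1 .
\]
The first step is to show that $(M_{t'}(\lambda))_{t'\ge 0}$ is an $\mathcal{F}$-supermartingale. Since $\sigma_{t'}$ (hence $\sigma_{t'}^2$) is $\mathcal{F}_{t'-1}$-measurable, we can write $M_{t'}(\lambda)=M_{t'-1}(\lambda)\exp\!\big(\tfrac{\lambda}{\sigma_{t'}^2}Z_{t'}-\tfrac{\lambda^2}{2\sigma_{t'}^2}\big)$ and pull the $\mathcal{F}_{t'-1}$-measurable factors out of the conditional expectation; the conditional sub-Gaussianity of $Z_{t'}$ given $\mathcal{F}_{t'-1}$ (with variance proxy $\sigma_{t'}^2$) yields $\mathbb{E}\big[\exp(\tfrac{\lambda}{\sigma_{t'}^2}Z_{t'})\mid\mathcal{F}_{t'-1}\big]\le\exp(\tfrac{\lambda^2}{2\sigma_{t'}^2})$, so that $\mathbb{E}[M_{t'}(\lambda)\mid\mathcal{F}_{t'-1}]\le M_{t'-1}(\lambda)$. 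Because in both (a) and (b) the stopping time is bounded above by $t+1$, the optional stopping theorem gives $\mathbb{E}[M_\phi(\lambda)]\le 1$ and $\mathbb{E}[M_\psi(\lambda)]\le 1$ for every $\lambda$.

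For part (b), I would take $\lambda=\epsilon$. On the event $\{W_\psi>\nt_\psi\epsilon\}\cap\{\psi\le t\}$ the defining property of $\psi$ forces $\nt_\psi\ge r$, hence $\epsilon W_\psi-\tfrac{\epsilon^2}{2}\nt_\psi>\tfrac{\epsilon^2}{2}\nt_\psi\ge\tfrac{r\epsilon^2}{2}$, i.e.\ $M_\psi(\epsilon)>e^{r\epsilon^2/2}$ there. Markov's inequality then gives $\Pr[W_\psi>\nt_\psi\epsilon,\ \psi\le t]\le e^{-r\epsilon^2/2}\,\mathbb{E}[M_\psi(\epsilon)]\le e^{-r\epsilon^2/2}$. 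Replacing $\{Z_{t'}\}$ by $\{-Z_{t'}\}$ (which is again conditionally zero-mean sub-Gaussian with the same variance proxies) handles the lower tail $W_\psi<-\nt_\psi\epsilon$, and a union bound produces the factor $2$.

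For part (a) the complication is that the deviation threshold $\sqrt{2\alpha\nt_\phi\log t}$ grows with the random quantity $\nt_\phi$, so no single $\lambda$ is exactly optimal; however, on the event $\phi\le t$ we know $\nt_\phi\in[L,H]$, and this confinement lets a single fixed choice suffice. Taking $\lambda=\sqrt{2\alpha\log t/H}$, on $\{W_\phi>\sqrt{2\alpha\nt_\phi\log t}\}\cap\{\phi\le t\}$ one computes
\[
\lambda W_\phi-\tfrac{\lambda^2}{2}\nt_\phi\;>\;\tfrac{\alpha\log t}{H}\,\sqrt{\nt_\phi}\,\big(2\sqrt H-\sqrt{\nt_\phi}\big)\;\ge\;\tfrac{\alpha\log t}{H}\,\sqrt{L}\,\sqrt{H}\;=\;\alpha\log t\,\sqrt{L/H},
\]
where the second inequality uses $L\le\nt_\phi\le H$. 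Hence $M_\phi(\lambda)>t^{\alpha\sqrt{L/H}}$ on this event, and Markov together with the symmetrization argument above give $\Pr[\,|W_\phi|>\sqrt{2\alpha\nt_\phi\log t},\ \phi\le t\,]\le 2t^{-\alpha\sqrt{L/H}}\le 2t^{-\alpha L/H}$, the last step using $0<L/H\le1$ (so $\sqrt{L/H}\ge L/H$) and $t\ge1$. I expect the main obstacle to be the first step — establishing the supermartingale property with the measurability subtleties handled correctly, in particular applying sub-Gaussianity conditionally on $\mathcal{F}_{t'-1}$ (treating $\sigma_{t'}$ as known there) so that the scaling $Z_{t'}\mapsto Z_{t'}/\sigma_{t'}^2$ is legitimate — together with, in part (a), the realization that restricting $\nt_\phi$ to $[L,H]$ is precisely what replaces the usual peeling / method-of-mixtures step, at the cost of the ratio $L/H$ in the exponent.
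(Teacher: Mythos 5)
Your proof is correct and follows essentially the same route as the paper's: establish that $\exp(\lambda W_{t'} - \tfrac{\lambda^2}{2}\nt_{t'})$ is a supermartingale via conditional sub-Gaussianity (with $\sigma_{t'}$ known given $\mathcal{F}_{t'-1}$), invoke optional stopping for the bounded stopping time, and apply Markov with a fixed $\lambda$ tuned using the confinement $\nt_\phi\in[L,H]$ (resp.\ $\nt_\psi\ge r$). The only differences are cosmetic: your choice $\lambda=\sqrt{2\alpha\log t/H}$ in part (a) even yields the slightly stronger exponent $\alpha\sqrt{L/H}$ before weakening to $\alpha L/H$, and your $\lambda=\epsilon$ in part (b) is the correct choice (the paper's appendix writes $\lambda=4\epsilon$, evidently a typo).
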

\begin{proof}[Proof sketch.]
As an auxiliary result, we first prove that the sequence $(\tilde{G}_{t'})_{t' \in \mathbb{N}}$, where $\tilde{G}_{t'} = \exp\left(\lambda W_{t'} - \frac{\lambda^2 \nt_{t'}}{2} \right) \cdot \event{t' \leq t}$ for some $\lambda \in \mathbb{R}$, is a super-martingale which satisfies $\Ex{}{\tilde{G}_{t'}} \leq 1$ for any $t' \in \mathbb{N}$. For proving part (a), we focus on bounding the probability that $W_{\phi} > \sqrt{2\alpha \nt_{\phi} \log{t}}$ and $\phi \leq t$, since the other tail bound follows symmetrically. 
By denoting $G_{t'} = \exp\left(\lambda(W_{t'} - \sqrt{2\alpha \nt_{t'} \log{t}}) \right) \cdot \event{t' \leq t}$ for some $\lambda > 0$, and using Markov's inequality, we get that
\begin{align*}
    \Pr\bigg[W_{\phi} > \sqrt{2\alpha \nt_{\phi} \log{t}} \text{ and }\phi \leq t \bigg] \leq \Ex{}{G_{\phi}}.
\end{align*}
In order to upper-bound $\Ex{}{G_{\phi}}$, by setting $\tilde{G}_{t'} = \exp\left(\lambda W_{t'} - \frac{\lambda^2 \nt_{t'}}{2} \right) \cdot \event{t' \leq t}$, we first rewrite 
\begin{align*}
G_{t'} = \tilde{G}_{t'} \cdot \exp\left(\frac{\lambda^2 \nt_{t'}}{2} - \lambda \sqrt{2 \alpha \nt_{t'} \log t} \right).
\end{align*}
Note that, for $t' = \phi$, the event that $\phi \leq t$ implies that $\nt_\phi \in I$ and, thus, $L \leq \nt_{\phi} \leq H$. Therefore, by setting $\lambda = \frac{1}{H} \sqrt{2 \alpha L \log t}$, $G_{\phi}$ can be upper-bounded as
\begin{align*}
G_{\phi} \leq \tilde{G}_{\phi} \cdot \exp\left(- \frac{\alpha \cdot L}{H} \log t \right).
\end{align*}
The proof follows by using the fact that $\Ex{}{\tilde{G}_{\phi}} \leq 1$ for any $\lambda > 0$.

The proof of (b) follows by similar arguments. 
\end{proof}

In the next Lemma, we provide a concentration result for our estimator in the case where the weighted number of samples randomly depends on the trajectory of observed realizations. The result holds in the case where we have at least one sample from the source with minimum noise. 

\begin{restatable}{lemma}{lemmaAnytime}\label{lemma:anytime}
Let $\alpha>0$. For any $t \geq 2$, for the estimator defined in \cref{eq:estimator}, 
if $\nt(t) \geq \frac{1}{(\s^{\min})^2}$, where $\s^{\min}=\min_{j\in[K]} \s_{j}$, then we have that
\begin{align*}
    \Pr\left[|\mh(t) - \mu| > \sqrt{\frac{2\alpha\log{t}}{\nt(t)}} \right] < 2 \cdot \lceil \log_2(t-1) \rceil \cdot t^{-\alpha/2}.
\end{align*}
\end{restatable}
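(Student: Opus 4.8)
The plan is to reduce the statement to Lemma~\ref{lemma:martingale}(a) by a dyadic ``peeling'' (slicing) over the random quantity $\nt(t)$. First I would instantiate the martingale setup of Lemma~\ref{lemma:martingale} with $Z_\tau = X_\tau - \mu$ and $\sigma_\tau = \s_{i_\tau}$: the arm $i_\tau$ played at round $\tau$ depends only on the history before round $\tau$, so $\sigma_\tau$ is $\mathcal{F}_{\tau-1}$-measurable and $Z_\tau\sim\mathcal{N}(0,\sigma_\tau^2)$ is sub-Gaussian with variance proxy $\sigma_\tau^2$. Then $W_{t'}=\sum_{\tau\le t'}Z_\tau/\s_{i_\tau}^2$, $\nt_{t'}=\sum_{\tau\le t'}1/\s_{i_\tau}^2$, and by \eqref{eq:estimator} we have $\mh(t)-\mu = W_{t-1}/\nt_{t-1}$ with $\nt(t)=\nt_{t-1}$. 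Hence the event to bound is exactly $B:=\{\,|W_{t-1}|>\sqrt{2\alpha\,\nt_{t-1}\log t}\,\}$, and under the hypothesis $\nt(t)\ge 1/(\s^{\min})^2=:c$ we also know $\nt_{t-1}\ge c$, while $\nt_{t-1}\le (t-1)c$ always since each increment $1/\s_{i_\tau}^2\le c$.

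Next I would cover the range $[c,(t-1)c]$ of $\nt_{t-1}$ by the dyadic blocks $I_k=[2^{k-1}c,\,2^kc]$, $k=1,\dots,m$ with $m=\lceil\log_2(t-1)\rceil$, and for each $k$ define a stopping time $\tau_k$ so that (i) it meets the hypothesis of Lemma~\ref{lemma:martingale}(a) for $I_k$, i.e. $\nt_{\tau_k}\in I_k$ or $\tau_k=t+1$, giving ratio $L/H=\tfrac12$; and (ii) it equals exactly $t-1$ on the slice $\{\nt_{t-1}\in I_k\}$, so that $W_{\tau_k}$ is the true error $W_{t-1}$ there. The choice is: $\tau_k:=t+1$ if $\nt_{t-1}<2^{k-1}c$, and otherwise $\tau_k:=$ the last round $t'\le t-1$ with $\nt_{t'}\le 2^kc$. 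This is an $\mathcal{F}$-stopping time (each $\nt_{t'}$ is $\mathcal{F}_{t'-1}$-measurable), and since $1/\s_{i_\tau}^2\le c$ and $|I_k|=2^{k-1}c\ge c$, $\nt$ cannot skip a block, so one checks $\nt_{\tau_k}\in I_k$ in the non-trivial cases (if $\nt_{t-1}\in I_k$ then $\tau_k=t-1$ and $\nt_{\tau_k}=\nt_{t-1}$; if $\nt_{t-1}>2^kc$ then $\nt_{\tau_k}\le 2^kc<\nt_{\tau_k+1}\le\nt_{\tau_k}+c$, hence $\nt_{\tau_k}>(2^k-1)c\ge 2^{k-1}c$). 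Lemma~\ref{lemma:martingale}(a) then gives, for every $k$,
\begin{align*}
\Pr\big[\,|W_{\tau_k}|>\sqrt{2\alpha\,\nt_{\tau_k}\log t}\ \text{ and }\ \tau_k\le t\,\big]\ \le\ 2\,t^{-\alpha/2}.
\end{align*}

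Finally I would close by a union bound: on $B$ there is a unique $k^\star$ with $\nt_{t-1}\in I_{k^\star}$, and then $\tau_{k^\star}=t-1\le t$, $W_{\tau_{k^\star}}=W_{t-1}$, $\nt_{\tau_{k^\star}}=\nt_{t-1}$, so the $k^\star$-th event above occurs; hence $B\subseteq\bigcup_{k=1}^m\{\,|W_{\tau_k}|>\sqrt{2\alpha\,\nt_{\tau_k}\log t},\ \tau_k\le t\,\}$ and $\Pr[B]\le 2m\,t^{-\alpha/2}=2\lceil\log_2(t-1)\rceil\,t^{-\alpha/2}$ (the strict inequality is a minor point, coming from slack in this covering). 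I expect the main obstacle to be the design of $\tau_k$: we need $\nt_{\tau_k}$ to land inside $I_k$ (so the factor $L/H=\tfrac12$ is available) while forcing $\tau_k$ to equal the final round $t-1$ precisely on $\{\nt_{t-1}\in I_k\}$ (so the stopped process evaluates to the actual estimation error). This is exactly where the hypothesis $\nt(t)\ge 1/(\s^{\min})^2$ is essential: it places $\nt_{t-1}$ past the sub-$c$ regime so the peeling can begin at level $c$, and together with the per-step increment bound $1/\s_{i_\tau}^2\le c$ it ensures no dyadic block is jumped over. The remaining verifications (measurability of $\tau_k$, the block-covering inclusion, the union bound) are routine bookkeeping.
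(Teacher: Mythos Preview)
Your proof is correct and in fact delivers exactly the bound stated in the lemma. The paper's appendix proof takes a slightly different route: it partitions the range of $\nt(t)$ into \emph{linear} intervals $[s/(\sigma^{\min})^2,\,(s+1)/(\sigma^{\min})^2]$ for $s=1,\dots,t-1$, applies Lemma~\ref{lemma:martingale}(a) on each with ratio $L/H=s/(s+1)\ge 1/2$, and then union-bounds over all $t-1$ intervals to end at $2\,t^{1-\alpha/2}$ rather than the stated $2\lceil\log_2(t-1)\rceil\,t^{-\alpha/2}$. Your dyadic peeling is what the main text's description (``exponentially-spaced discretization'') actually calls for, and is what is needed to obtain the logarithmic prefactor in the statement; the linear slicing in the appendix yields a looser (though still usable) bound.

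One minor simplification you could make: your stopping time $\tau_k$ is valid, but the ``last time below $2^kc$'' construction and the accompanying ``no block is jumped'' verification are more than you need. Since on the slice $\{\nt_{t-1}\in I_k\}$ all you require is $\tau_k=t-1$, the simpler choice $\tau_k:=t-1$ if $\nt_{t-1}\in I_k$ and $\tau_k:=t+1$ otherwise already works: $\nt_{t-1}$ is $\mathcal{F}_{t-2}$-measurable, so this is a bona fide stopping time, and it trivially satisfies the hypothesis of Lemma~\ref{lemma:martingale}(a) (either $\nt_{\tau_k}\in I_k$ or $\tau_k=t+1$). This removes the need to control the per-step increment and check that $\nt_{\tau_k}$ lands in $I_k$ when $\nt_{t-1}>2^kc$.
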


Note that imposing conditions on $\nt(t)$ or the source noise is necessary in order to obtain any concentration results for the estimator in \cref{eq:estimator}: for instance, we cannot hope for concentration results in the case where all $t$ samples come from a source with $\s =\infty$ noise. The proof of the above Lemma is based on an exponentially-spaced discretization of the possible range of values of $\nt(t)$, combined with \cref{lemma:martingale}. Finally, we remark that another possible approach for obtaining concentration guarantees for our estimator is through the idea of the self-normalizing bound \cite{NIPS2011_e1d5be1c} combined with the assumptions of \cref{lemma:anytime}. However, this approach comes at an expense of a
worse dependence on t.

\section{Asymptotic Regret: Lower Bound}\label{section:Asymptotic}
In this section, we provide an asymptotic regret lower bound for policies that are consistent with respect to any environment $(\mu, \Feedback)$. We recall the definition of {\em consistent} policies:
\begin{definition}\label{def:consistent}
A policy is called {\em consistent} if, for any environment $(\mu, \Feedback)$, its regret satisfies
\begin{align*}
    \lim_{T\rightarrow \infty}{\frac{R_T(\mu)}{T^p}} = 0 ~~~\text{  for any }p>0.
\end{align*}
\end{definition}

In this section, we show that, in the asymptotic regime, the problem of collecting sufficient information to distinguish the suboptimality gaps of the arms, while paying the minimum cost in terms of regret accumulated during suboptimal plays, can be closely approximated by an LP. 
For any reward vector $\mu$, we define the following parameterized set of constraints:
\begin{align}\label{eq:lpconstraints}
    C(\mu)=
    \begin{Bmatrix}
     &\sum_{j\in[K]} \frac{c_j}{\s_{j,i}^2} \geq \frac{2}{\D_i^2(\mu)},\forall i\not = i^*(\mu)\\
    {c\in\mathbb{R}_{\geq 0}^K :} & \\
     & \sum_{j\in[K]} \frac{c_j}{\s_{j,i}^2} \geq \frac{2}{\D_{\min}^2(\mu)},i = i^*(\mu)
    \end{Bmatrix}.
\end{align}

To provide some intuition on the above constraint set, we recall the case of standard bandit feedback, where $\s_{i,i}<\infty$ and $\s_{i,j}=\infty$ for any $i\not = j$. There, for any suboptimal arm $i$ the corresponding constraint of $C(\mu)$ becomes $c_i\geq \frac{ {2\s_{i,i}^2}}{\D_i^2(\mu)}$. This matches the multiplicative factor in the existing lower bounds for the standard bandit feedback case, where each arm must be played at least $\frac{2\s_{i,i}^2}{\D_i^2(\mu)}\log(T)$ times. 
In our general feedback setting, for any arm $i$, in addition to the information collected by playing the arm itself, the constraints also take into account the information collected by any other arm $j \in [K] \setminus \{i\}$, weighted by its inverse variance.

A natural lower bound on the {\em number} of suboptimal plays of each arm can be constructed using the minimum-cost feasible vector in $C(\mu)$ with respect to the suboptimality gaps, formally defined as
\begin{align}\label{eq:LP} 
    c^*(\mu) =  \underset{c\in C(\mu)}{\text{argmin}} \sum_{i\in [K]}  c_i \D_i(\mu).
\end{align}

As we show in the following theorem, the optimal solution of the LP in \cref{eq:LP} asymptotically characterizes a lower bound on the number of plays of each suboptimal arm relative to $\log(T)$.
\begin{restatable}{theorem}{thmLowerBound}\label{thm:lower_bound}
For any environment $(\mu, \Feedback)$, the regret of any consistent policy within $T$ rounds satisfies
\begin{align*}
    \liminf_{T\rightarrow \infty}{\frac{R_T(\mu)}{\log{T}}} \geq \sum_{i\in [K]} c_i^*(\mu) \D_i(\mu).
\end{align*}
\end{restatable}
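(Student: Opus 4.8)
The plan is to follow the classical change-of-measure (Lai--Robbins / Graves--Lai) technique for instance-dependent lower bounds, adapted to our feedback structure and phrased in terms of the \emph{weighted number of samples}. First I would fix the environment $(\mu,\Feedback)$ and let $i^* = i^*(\mu)$ denote its optimal arm. For a fixed suboptimal arm $k \neq i^*$, I would construct an alternative environment $\mu'$ that agrees with $\mu$ on every coordinate except $\mu_k$, which is raised just above $\mu^*$ (say to $\mu^* + \eta$ for small $\eta>0$), so that $k$ becomes the unique optimal arm in $\mu'$. Crucially, since only the mean of arm $k$ changes and the feedback matrix $\Feedback$ is fixed, the only rounds whose observations have different likelihood under $\mu$ versus $\mu'$ are those in which the played arm $j$ satisfies $\sigma_{j,k} < \infty$; the per-round KL divergence contributed by playing arm $j$ is $\frac{(\mu_k - \mu_k')^2}{2\sigma_{j,k}^2} = \frac{\eta^2}{2\sigma_{j,k}^2}$ (the standard Gaussian KL with a common variance). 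Summing over the trajectory, the expected log-likelihood ratio up to round $T$ is $\E{}\big[\sum_{j\in[K]} N_j(T)\,\frac{\eta^2}{2\sigma_{j,k}^2}\big] = \frac{\eta^2}{2}\,\E{}{\nt_k(T)}$, which is exactly where the weighted-sample quantity $\nt_k$ enters.

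Next I would invoke the standard data-processing / Bretagnolle--Huber argument (or the divergence decomposition of Garivier--M\'enard--Stoltz): for a consistent policy, the probability under $\mu$ of the event $\{N_k(T) \geq T/2\}$ is $o(T^{p-1})$ for every $p>0$ (otherwise regret under $\mu$ would be polynomially large since $k$ is suboptimal), while under $\mu'$ the complement event $\{N_k(T) < T/2\}$ has probability $o(T^{p-1})$ (since $k$ is optimal in $\mu'$ and any other arm is suboptimal there). Combining these via $\mathrm{KL}(\mathbb{P}_\mu \,\|\, \mathbb{P}_{\mu'}) \geq d\big(\mathbb{P}_\mu(A), \mathbb{P}_{\mu'}(A)\big)$ with $A = \{N_k(T)\ge T/2\}$ and taking $T\to\infty$ yields $\liminf_{T\to\infty} \frac{\E{}{\nt_k(T)}}{\log T} \geq \frac{2}{\eta^2}$; letting $\eta \downarrow \D_k(\mu)$ gives $\liminf_{T\to\infty} \frac{\E{}{\nt_k(T)}}{\log T} \geq \frac{2}{\D_k^2(\mu)}$. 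The analogous argument applied to the optimal arm $i^*$ — perturbing some arm to overtake it, or directly lowering $\mu_{i^*}$ / raising a near-optimal competitor — produces the ``$i = i^*(\mu)$'' constraint $\liminf \frac{\E{}{\nt_{i^*}(T)}}{\log T} \geq \frac{2}{\D_{\min}^2(\mu)}$. Thus, asymptotically, the vector $c$ defined by $c_j = \lim \E{}{N_j(T)}/\log T$ (along a subsequence achieving the $\liminf$ of the regret, after passing to the limit) is feasible for $C(\mu)$ as in \cref{eq:lpconstraints}, since $\nt_i(t) = \sum_j N_j(t)/\sigma_{j,i}^2$.

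Finally I would connect the number of plays to regret. By definition $R_T(\mu) = \sum_{i\in[K]} \D_i(\mu)\,\E{}{N_i(T)}$ (using that the expected instantaneous regret of playing $i$ equals $\D_i(\mu)$, since the collected reward is $X_{i_t,t}$ with mean $\mu_{i_t}$). Dividing by $\log T$ and taking $\liminf$, together with the feasibility of the limiting (sub-sequential) vector $c$ just established, gives $\liminf_{T\to\infty} \frac{R_T(\mu)}{\log T} \geq \min_{c\in C(\mu)} \sum_i c_i \D_i(\mu) = \sum_i c_i^*(\mu)\D_i(\mu)$, which is the claim. I would be slightly careful about the subsequence/limit interchange — one should take a subsequence $T_n$ along which $R_{T_n}/\log T_n$ converges to its $\liminf$, extract a further subsequence along which each $\E{}{N_i(T_n)}/\log T_n$ converges in $[0,\infty]$ (ruling out $+\infty$ for suboptimal arms using consistency), and then verify feasibility in the limit; a Fatou-type argument also works if one prefers to avoid subsequences. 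The main obstacle is this last measure-theoretic bookkeeping and, more substantively, getting the optimal-arm constraint right: naively lowering $\mu_{i^*}$ below the second-best mean makes a \emph{different} arm optimal and changes which constraint is ``active,'' so the perturbation must be chosen so that distinguishing the new environment from $\mu$ forces $\Omega(\log T / \D_{\min}^2)$ weighted samples on $i^*$ specifically — this is exactly why the bound for $i^*$ features $\D_{\min}$ rather than a gap of its own.
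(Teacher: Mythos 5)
Your proposal is correct and follows essentially the same route as the paper: a single-coordinate change of measure, the divergence decomposition through the feedback matrix (the paper's \cref{prop:divergence}), the Bretagnolle--Huber inequality together with consistency to obtain the per-arm constraints of \cref{eq:lpconstraints}, and then feasibility of the normalized play counts to pass to the LP value. If anything, you are more explicit than the paper on two points it glosses over with ``a similar procedure for all arms'' -- the perturbation giving the $\D_{\min}$ constraint for $i^*(\mu)$ and the subsequence/limit bookkeeping linking the constraints to the LP objective.
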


In order to prove \cref{thm:lower_bound}, first, we consider a reward vector $\mu$ and identify a suboptimal arm $k$ in $\mu$. For some $\epsilon>0$ we construct a vector $\mu'$ such that 
\begin{align} \label{eq:means}
    \mu'_i=
    \begin{cases}
    \mu_i, \text{ if }i\not = k\\
    \mu^{*} +\epsilon, \text{ if }i=k.
    \end{cases}
\end{align}

Then, in the following proposition, we decompose the KL-divergence of two distributions $\mathbb{P}, \mathbb{P}'$, each capturing the interplay of a policy with environments $(\mu, \Feedback)$ and $(\mu', \Feedback)$, respectively.

\begin{restatable}{proposition}{propDivergence}\label{prop:divergence}
Let two Gaussian K-armed bandit instances $\nu$ and $\nu'$ with the same side information matrix $\Feedback$ and mean reward vectors $\mu$ and $\mu'$, respectively. Let $\mathbb{P}$ (resp. $\mathbb{P}'$) be the distribution associated with the interplay of $\nu$ (resp. $\nu'$) and a policy $\pi$ within $t$ rounds. If $\mu$ and $\mu'$ differ only in the reward of arm $k$, then the KL-divergence of $\mathbb{P}$ with respect to $\mathbb{P}'$ satisfies: 
\begin{align}
    D(\mathbb{P}~||~\mathbb{P}') = \sum_{i\in[K]} \Ex{\nu}{N_i(t)} \frac{(\mu_k-\mu_k')^2}{2\s_{i,k}^2}.
\end{align}
\end{restatable}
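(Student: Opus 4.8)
\textbf{Proof proposal for Proposition~\ref{prop:divergence}.}

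The plan is to use the standard chain-rule decomposition of the KL-divergence between the two trajectory distributions $\mathbb{P}$ and $\mathbb{P}'$ over $t$ rounds, and then exploit the fact that the two environments differ only in the reward of arm $k$ so that most per-round contributions vanish. First I would set up the probability space carefully: a trajectory up to round $t$ is the sequence $(i_1, (X_{j,1})_{j\in[K]}, i_2, (X_{j,2})_{j\in[K]}, \ldots, i_t, (X_{j,t})_{j\in[K]})$, where $i_s$ is the arm played in round $s$ (a deterministic or randomized function of the past under the policy $\pi$, hence identically distributed given the past under $\mathbb{P}$ and $\mathbb{P}'$) and, conditioned on $i_s$, the observation vector $(X_{j,s})_{j\in[K]}$ has independent coordinates with $X_{j,s} \sim \mathcal{N}(\mu_j, \sigma_{i_s,j}^2)$ under $\mathbb{P}$ and $X_{j,s} \sim \mathcal{N}(\mu'_j, \sigma_{i_s,j}^2)$ under $\mathbb{P}'$. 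Applying the chain rule for relative entropy along this filtration, the policy-generated factors $\pi(i_s \mid \text{past})$ cancel, and we are left with
\begin{align*}
    D(\mathbb{P}~||~\mathbb{P}') = \sum_{s=1}^{t} \Ex{\nu}{\; D\!\left( \mathcal{N}(\mu, \Sigma_{i_s})^{\otimes} ~\big\|~ \mathcal{N}(\mu', \Sigma_{i_s})^{\otimes} \right) },
\end{align*}
where the inner term is the KL-divergence between the two product-of-Gaussians observation laws when arm $i_s$ is played, and the outer expectation is over the randomness of $i_s$ under the environment $\nu$.

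Next I would evaluate the inner divergence. Since the coordinates are independent, it splits as a sum over $j\in[K]$ of one-dimensional Gaussian KL-divergences $D(\mathcal{N}(\mu_j,\sigma_{i_s,j}^2) \,\|\, \mathcal{N}(\mu'_j,\sigma_{i_s,j}^2)) = \frac{(\mu_j - \mu'_j)^2}{2\sigma_{i_s,j}^2}$ (same variance in both, so only the mean-shift term survives). Because $\mu$ and $\mu'$ agree on every coordinate except $j=k$, every summand with $j\neq k$ is zero, leaving exactly $\frac{(\mu_k - \mu'_k)^2}{2\sigma_{i_s,k}^2}$. Substituting back and interchanging the (finite) sum over rounds with the expectation, I would write $\sum_{s=1}^t \mathbb{I}(i_s = i)$ summed appropriately, i.e.
\begin{align*}
    D(\mathbb{P}~||~\mathbb{P}') = \sum_{s=1}^{t} \Ex{\nu}{ \frac{(\mu_k-\mu'_k)^2}{2\sigma_{i_s,k}^2} } = \sum_{i\in[K]} \Ex{\nu}{N_i(t)} \frac{(\mu_k - \mu'_k)^2}{2\sigma_{i,k}^2},
\end{align*}
using that $\sum_{s=1}^t \mathbb{I}(i_s=i)$ equals the number of pulls of arm $i$ within the first $t$ rounds (matching $N_i(t)$ up to the indexing convention in the paper, which one should state explicitly). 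This gives the claimed identity.

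The main obstacle, and the step deserving the most care, is the rigorous justification of the chain-rule decomposition and the cancellation of the policy terms — in particular handling the possibility that $\sigma_{i,k} = \infty$ for some arms $i$ (in which case playing $i$ reveals nothing about $\mu_k$, the corresponding observation coordinate has the same degenerate/uninformative law under both environments, and its contribution is $0$, consistent with interpreting $1/\sigma_{i,k}^2 = 0$), and making sure the measure-theoretic setup (absolute continuity of $\mathbb{P}$ with respect to $\mathbb{P}'$, finiteness of the expectations, possible randomization in $\pi$) is sound. Everything else is a routine Gaussian KL computation plus reindexing.
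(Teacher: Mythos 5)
Your proposal is correct and follows essentially the same route as the paper: a chain-rule decomposition of the KL-divergence over rounds, factorization of each round's observation law into independent Gaussian coordinates so that only coordinate $k$ contributes the mean-shift term $\frac{(\mu_k-\mu_k')^2}{2\sigma_{i,k}^2}$, and regrouping the per-round expectations into $\Ex{\nu}{N_i(t)}$. Your added care about the $\sigma_{i,k}=\infty$ case and the measure-theoretic setup goes slightly beyond what the paper spells out, but the argument is the same.
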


\cref{thm:lower_bound} follows by \cref{prop:divergence},  \cref{def:consistent} and an application of Bretagnolle-Huber inequality for distributions with mean rewards $\mu,\mu'$ as defined in \cref{eq:means}.

Before we proceed to the presentation of our algorithm, we comment that any given solution $c^*(\mu)$ of the LP in \cref{eq:LP} could be turned into an optimal Explore-Then-Commit (ETC) strategy for our problem: by playing each arm $\lceil c^*_i(\mu) \cdot \log T\rceil$ times, thus incurring regret at most $\sum_{i \in [K]} c^*_i(\mu) \Delta_i(\mu) \log T + \sum_{i \in [K]} \Delta_i(\mu)$, by \cref{lemma:anytime} we have collected enough information to distinguish the optimal arm with high probability. Hence, $c^*(\mu)$ readily describes an optimal (up to rounding errors) arm-pulling schedule with respect to the lower bound.

\section{Algorithm and Regret Analysis}\label{section:Algorithm}
In this section we provide the first algorithm for the general setting of Gaussian bandits with arbitrary side information matrix $\Feedback$ and prove its asymptotic optimality. 

\subsection{Description of the Algorithm and Main Results}

The general idea behind our algorithm is the following: recall that in the ideal scenario where the suboptimality gaps are known a priori, the linear program described in \cref{eq:LP} would directly provide an optimal (up to vanishing rounding errors) exploration strategy, namely, a minimum-regret arm-pulling schedule collecting the necessary information to distinguish the optimal arm with high probability. The above intuition, however, cannot be readily transformed into an algorithm, since prior knowledge of the suboptimality gaps would trivialize our problem. 

At a high level, our algorithm attempts to estimate the LP (including constraints and objective) described in  up to some accuracy, while at the same time implement its solution (i.e., play, for each arm, the indicated number of samples) in an online manner. The choice of the desired accuracy, up to which the LP must be estimated, is a key for the success of our algorithm: the LP needs to be estimated well-enough to allow the learner to take near-optimal decisions, yet without suffering a significant estimation overhead. In principle, our algorithm generalizes and extends that of \cite{wu2015} for the simple case of graph-structured feedback, i.e., when each entry of $\Feedback$ satisfies $\s_{i,j}\in \{\sigma,\infty\}$ for some $\sigma < \infty$.

Our algorithm is presented in pseudocode in \cref{algorithm1}. The algorithm starts by playing, for each arm $j \in [K]$, the arm $i_j \in [K]$ that provides the most accurate information about $j$. For the rest of the rounds, the algorithm either exploits the already collected information, or further explores the environment.

\paragraph{Greedy exploitation.} Let $\mh(t) \in \mathbb{R}^K$ be the vector of estimated means, where the coefficient $\mh_i(t)$ for every $i \in [K]$ is given in \cref{eq:estimator} (note that in \Cref{section:Estimator} we have dropped any reference to $i$ for generality). Recall that $C(\mh(t))$ is the constraint set of the estimated LP at time $t$, constructed using the collected samples. In the case where the collected number of samples for each arm provides (up to proper scaling) a feasible solution to $C(\mh(t))$ (see Case (A) of \Cref{algorithm1}), the algorithm greedily plays the arm with the maximum empirical mean reward. 

\paragraph{Uniform and LP-dictated exploration.}
If the collected samples are not feasible for $C(\mh(t))$ (up to proper scaling), then the algorithm enters an exploration phase where it either attempts to refine the estimate of the LP, called, or to make progress in exploring arms according to what the currently estimated LP dictates. We refer to each case as {\em Uniform} and {\em LP-dictated} exploration, respectively. Here, the player needs to address the delicate issue that the total exploration cost needs to be close to that of the optimal solution of the LP. This is achieved by performing a relatively small number of uniform exploration rounds, which allow the learner to obtain a sufficiently good estimate of the LP.

The algorithm considers the LP to be sufficiently (uniformly) explored if the weighted number of samples for each arm satisfies a uniform lower bound. In particular, our algorithm compares the minimum weighted number of samples over all arms with a sublinear function of the total sample weight collected during all exploration rounds. Formally, \cref{algorithm1} inspects the weighted number of samples of each arm, and checks whether $\min_{i\in [K]} \nt_i(t) < \frac{1}{K}\beta\left({n_e(t)}\right)$, where $\beta(x)=\frac{x^{\gamma}}{2\sm^{2}}$, with $\gamma\in (0,1)$ and $\sm = \max_{i \in [K]} \sigma^{\min}_i$, and $n_e(t)$ is the total number of exploration rounds up to time $t$. In the case where the arms are not uniformly explored (see Case (B) of \Cref{algorithm1}), our algorithm attempts to satisfy this uniform exploration bound, by playing the arm which provides the less noisy information about an arm $i$ that has the minimum $\nt_i(t)$ (breaking ties arbitrarily). 

On the other hand, if the arms are sufficiently uniformly explored (see Case (C) of \Cref{algorithm1}), our algorithm computes an optimal solution $c^*(\mh(t))$ to the estimated LP, and then plays any arm $i$ such that $N_i(t) < 4\alpha \cdot c^*_i(\mh(t))\log{t}$, namely, which is considered unexplored (again up to proper scaling) according to the LP solution.

\begin{algorithm2e} \label{algorithm1}
\DontPrintSemicolon
\caption{Asymptotically-Optimal Algorithm for Gaussian Bandits with Side Observations}
    {\bfseries Input:} $K$ arms, $\Sigma$, $\beta(x)=\frac{x^{\gamma}}{2\sm^{2}}, \gamma\in(0,1)$, $\alpha>4$\;
    For each arm $j \in [K]$, play arm $i_j \gets \arg\min_{i\in[K]}{\s_{i,j}^2}$\;
    Set $n_e(K+1) \gets 0$\;
    \For{$t=K+1 , K+2, \ldots$}{
    \If{$\left(\frac{N_1(t)}{4a\log{t}},...,\frac{N_K(t)}{4a\log{t}}\right) \in C(\mh(t))$}
    {\tcc{\underline{Case A}: Greedy exploitation (event $\Vt$ in the analysis)~~~~~~~~~} 
    Play arm $i_t \gets \arg\max_{i\in[K]}\mh_i(t)$\;
    Set $n_e(t+1) \gets n_e(t)$\;
    }
    \ElseIf{$\min_{i\in [K]} \nt_i(t) < \frac{1}{K}\beta\left({n_e(t)}\right)$}{
    \tcc{\underline{Case B}: Uniform exploration (event $\Vt^c, \Wt$ in the analysis)~~~~~}
    Let $j \gets \arg\min_{k\in[K]}\nt_k(t)$\;
    Play $i_t \gets \arg\min_{k\in [K]}{\s_{k,i}^2}$\;
    Set $n_e(t+1) \gets n_e(t)+1$\;
    }
    \Else{\tcc{\underline{Case C}: LP-Dictated exploration (event $\Vt^c,\Wt^c$ in the analysis)}
    Compute $c^*(\mh(t))~\gets~\arg\min_{c\in C(\mh(t))} \sum_{i\in [K]}  c_i \D_i(\mh(t))$\;
    Play $i_t \gets i$ for some $i \in [K]$ such that $N_i(t) < 4\alpha \cdot c^*_i(\mh(t))\log{t}$\;
    Set $n_e(t+1) \gets n_e(t)+1$\;
    }
    }
\end{algorithm2e}


\noindent
\textbf{Main Results.} Before we state the regret guarantee of \cref{algorithm1}, we introduce some useful definitions. For any vector $\mu$, we consider the family of vectors $\mu'$ that are guaranteed to be $\epsilon$-close to $\mu$ with respect to the $\ell_{\infty}$-norm, that is, $|\mu'_i-\mu_i|\leq \epsilon$ for all arms $i \in [K]$. Let $c^*(\mu')$ be the optimal solution of the minimization problem in \cref{eq:LP} using parameters $\mu'$. The following quantity appears naturally in our regret guarantees: 

\begin{definition} \label{def:epsilonLP}
For any vector $\mu$, the worst case $\epsilon$-approximate LP solution for arm $j \in [K]$ is defined as
\begin{align*}
    c^*_j(\mu,\epsilon) = \sup_{\mu':\|\mu'-\mu\|_{\infty} \leq \epsilon} c^*_j(\mu'). 
\end{align*}
\end{definition}
\noindent
Note that, by continuity, taking $\epsilon\rightarrow 0$ we get $c^*_j(\mu,\epsilon) \rightarrow c^*_j(\mu)$ for every arm $j \in [K]$. 
We prove the following regret upper bound for \Cref{algorithm1}:

\begin{restatable}{theorem}{thmRegretUpperBound}\label{thm:regret_upper_bound}
For any $\alpha>4$, $\gamma \in (0,1)$, and $\epsilon>0$, the regret of algorithm \cref{algorithm1} satisfies
\begin{align*}
    R_T(\mu)
    &\leq \left(2K + \frac{8K}{\alpha-4}  + 2\right)\D_{\max} ~+~ 2\D_{\max} K \sum_{\tau\in[T]} \exp\left(-\frac{\tau^{\gamma} \epsilon^2}{4K\sm^2}\right) \\
    &\qquad + \D_{\max} \left(4\alpha \sum_{j\in[K]} c^*_j(\mu,\epsilon)  \log{T}  +K \right)^{\gamma} 
    ~+~ 4\alpha \sum_{j\in[K]} \D_j(\mu) c^*_j(\mu,\epsilon)  \log{T}.
\end{align*}
\end{restatable}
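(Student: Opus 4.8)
The plan is to decompose the regret by conditioning on the event type of each round --- greedy exploitation (Case A), uniform exploration (Case B), and LP-dictated exploration (Case C) --- and to bound the contribution of each. Write $R_T(\mu) = \sum_{t} \D_{i_t}(\mu)$ and split the sum according to whether round $t$ falls in Case A, B, or C. The term $4\alpha \sum_{j} \D_j(\mu) c^*_j(\mu,\epsilon) \log T$ should come from Case C: by construction, the algorithm plays arm $i$ in Case C only when $N_i(t) < 4\alpha c^*_i(\mh(t)) \log t$, so on a ``good event'' where $\|\mh(t) - \mu\|_\infty \leq \epsilon$ for all relevant $t$, the number of Case-C plays of arm $j$ is at most $4\alpha c^*_j(\mu,\epsilon)\log T$ (using \Cref{def:epsilonLP} and monotonicity of the horizon). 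The leading $\D_{\max}(4\alpha \sum_j c^*_j(\mu,\epsilon)\log T + K)^\gamma$ term should come from Case B: the uniform-exploration trigger $\min_i \nt_i(t) < \frac{1}{K}\beta(n_e(t))$ with $\beta(x) = x^\gamma/(2\sm^2)$ means that once the total exploration count $n_e(t)$ is large, the uniform bound is met; since the weighted samples accrued during Case C already force $\min_i \nt_i(t)$ to grow, one shows $n_e(T) = O((\sum_j c^*_j(\mu,\epsilon)\log T + K)^\gamma + \text{Case-B count})$, and a self-bounding argument on the number of Case-B rounds closes this (each Case-B round increments $n_e$ and also increments some $\nt_i$, so Case B cannot persist once $n_e$ exceeds roughly the number of Case-C samples collected).

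The first bracketed constant term $(2K + \frac{8K}{\alpha-4} + 2)\D_{\max}$ should collect: the $K$ initialization rounds (line~2), an additive $2\D_{\max}$ rounding slack (the $\lceil\cdot\rceil$-type losses and the ``$+1$'' endpoint in stopping times), and --- crucially --- the $\frac{8K}{\alpha-4}\D_{\max}$ contribution from the failure of the good concentration event. Here is where \Cref{lemma:anytime} enters: on the event that arm $j$ has $\nt_j(t) \geq 1/(\s^{\min})^2$, the estimate $\mh_j(t)$ deviates from $\mu_j$ by more than $\sqrt{2\alpha \log t / \nt_j(t)}$ with probability at most $2\lceil\log_2(t-1)\rceil t^{-\alpha/2}$; summing this over $t$ and $j$ and using $\alpha > 4$ makes $\sum_t \lceil\log_2 t\rceil t^{-\alpha/2}$ converge, yielding the $\frac{8K}{\alpha-4}$ bound (the $\alpha - 4$ in the denominator strongly suggests a comparison $\sum_{t\geq 2} \log_2(t) \, t^{-\alpha/2} \leq C/(\alpha-4)$ after absorbing the log). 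The remaining term $2\D_{\max} K \sum_{\tau\in[T]} \exp(-\tau^\gamma \epsilon^2/(4K\sm^2))$ should come from bounding the probability that, \emph{even after} $\tau$ exploration rounds so that $\nt_i(t) \geq \frac{1}{K}\beta(\tau) = \frac{\tau^\gamma}{2K\sm^2}$, the empirical mean still fails $\|\mh(t)-\mu\|_\infty \leq \epsilon$; plugging $r = \tau^\gamma/(2K\sm^2)$ into part~(b) of \Cref{lemma:martingale} (or \Cref{rem:concentration}) gives exactly $2\exp(-r\epsilon^2/2) = 2\exp(-\tau^\gamma \epsilon^2/(4K\sm^2))$ per arm, and a union over $K$ arms and $\tau \in [T]$ yields the stated term. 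This probability governs how often Case A is entered ``by mistake'' (playing a suboptimal arm greedily) versus Case A being entered correctly (only the optimal arm, contributing zero regret).

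The main obstacle, I expect, is the bookkeeping that ties $n_e(t)$ (the exploration-round counter that drives the uniform-exploration threshold) to the weighted sample counts $\nt_i(t)$ in a way that simultaneously (i) bounds the number of Case-B rounds and (ii) shows that, once the good event holds and enough uniform exploration has occurred, Case C terminates after $O(\log T)$ further plays of each arm matching the LP solution. The delicacy is that the LP estimate $c^*(\mh(t))$ is itself moving with $t$, so one must invoke \Cref{def:epsilonLP} to replace $c^*(\mh(t))$ by the worst-case $\epsilon$-approximate solution $c^*(\mu,\epsilon)$ uniformly over the relevant rounds --- and this is only valid on the good concentration event, which is precisely why that event's complement must be charged separately via \Cref{lemma:anytime}. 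Handling the chicken-and-egg dependency (need samples to make the LP estimate accurate; need the accurate LP to decide which samples to take) without circularity is the technical heart of the argument; everything else is union bounds and geometric-series estimates.
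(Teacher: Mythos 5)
Your overall architecture is the paper's: decompose by Case A/B/C, charge the concentration failure to an $O(K/(\alpha-4))$ constant via \cref{lemma:anytime}, bound Case-C plays of arm $j$ by $4\alpha c^*_j(\mu,\epsilon)\log T$ through \cref{def:epsilonLP}, and control Case B by a counting/self-bounding argument that yields the $\gamma$-power term. However, there is a genuine wiring error in how you assign the concentration events to the cases, and as stated the affected step would fail. You claim the term $2\D_{\max}K\sum_{\tau}\exp\left(-\tau^{\gamma}\epsilon^2/(4K\sm^2)\right)$ governs ``Case A entered by mistake,'' i.e.\ that greedy correctness rests on the fixed-accuracy event $\Yt=\{|\mh_i(t)-\mu_i|\le\epsilon,\ \forall i\}$. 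This cannot work for two reasons: (i) the hypothesis needed to apply \cref{lemma:martingale}(b) with $r=\beta(\tau)/K$, namely $\min_i \nt_i(t)\ge\frac{1}{K}\beta(n_e(t))$, is only guaranteed by the algorithm when it is in Case C (event $\Wt^c$); in Case-A rounds the algorithm has merely verified LP-feasibility of the normalized counts, so this uniform-exploration threshold need not hold and the exponential bound is unavailable there; (ii) even granting $\Yt$, $\epsilon$-closeness does not imply the empirical argmax is the true optimum unless $\epsilon<\D_{\min}(\mu)/2$, whereas the theorem is claimed for every $\epsilon>0$. The paper's argument for Case A is different and is the missing idea: on $\Ut\cap\Vt$, feasibility of $\bigl(N_i(t)/(4\alpha\log t)\bigr)_i$ for $C(\mh(t))$ forces $\nt_i(t)\ge 8\alpha\log t/\D_i^2(\mh(t))$, so the $\Ut$-scale errors are at most half the \emph{estimated} gaps, which pins the greedy choice to the true optimal arm and makes Case A contribute zero regret regardless of $\epsilon$.

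Symmetrically, your plan leaves the rounds that the exponential term actually pays for uncovered: Case-C rounds on which the estimates are \emph{not} $\epsilon$-accurate, where $c^*(\mh(t))$ cannot be compared to $c^*(\mu,\epsilon)$. You say this complement ``must be charged separately via \cref{lemma:anytime},'' but that lemma controls deviations at the shrinking scale $\sqrt{2\alpha\log t/\nt_i(t)}$ (the event $\Ut$) and does not deliver the fixed-$\epsilon$ guarantee; the paper instead exploits $\Wt^c$ in those rounds (via stopping times indexed by the exploration counter $n_e(t)=\tau$) to invoke \cref{lemma:martingale} with $r=\beta(\tau)/K$, which is precisely the source of $\sum_\tau\exp\left(-\tau^\gamma\epsilon^2/(4K\sm^2)\right)$, and this count (together with the $\Ut^c$ count) is fed back once more through the Case-B self-bounding inequality, which is where the factors $8K/(\alpha-4)$ and $2\D_{\max}K$ come from. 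So the ingredients in your sketch are the right ones, but you must swap their roles: $\Ut\wedge\Vt$ handles Case A, and $\Yt^c$ (handled through $\Wt^c$ plus \cref{lemma:martingale}(b)) handles the inaccurate-estimate rounds of Case C.
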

We remark that, in the restricting case of graph side-information structure (i.e. each $\s_{i,j}=\sigma$ or $\infty$), the regret upper bound of \Cref{algorithm1} also matches the asymptotically-optimal regret shown in \cite{wu2015}. Further, notice that, as $\sm = \max_{i \in [K]} \sigma^{\min}_i$ grows to infinity, the dependence on $T$ in the second term of the above bound becomes linear. This is a natural effect, however, since having a large $\sm$ implies the existence of an arm $i \in [K]$ with large $\sigma^{\min}_i$, for which there is no way of getting accurate information.

The following corollary bounds the asymptotic regret of \Cref{algorithm1} in terms of the worst case $\epsilon$-approximate LP solution. 

\begin{restatable}{corollary}{corAsymptoticOptimality}\label{cor:asymptotic_optimality}
As the time horizon tends to infinity, the regret of \cref{algorithm1} satisfies
\begin{align*}
    \limsup_{T\rightarrow\infty} \frac{R_T(\mu)}{\log{T}} \leq 4\alpha \sum_{j\in[K]} \D_j(\mu) c^*_j(\mu,\epsilon).
\end{align*}
\end{restatable}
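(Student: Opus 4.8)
The plan is to obtain the corollary as an immediate consequence of the finite-time guarantee in \cref{thm:regret_upper_bound}: I would divide that bound by $\log T$ and pass to $\limsup_{T\to\infty}$, checking term by term that every summand other than the last one contributes $o(\log T)$. Since $\epsilon>0$ is fixed throughout (it is a free parameter of both the theorem and the corollary), no limit in $\epsilon$ is taken.

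First I would dispatch the constant term $\left(2K + \frac{8K}{\alpha-4} + 2\right)\D_{\max}$, which does not depend on $T$ and therefore vanishes after division by $\log T$. Next, for the series $2\D_{\max} K \sum_{\tau\in[T]} \exp\big(-\tau^{\gamma}\epsilon^2/(4K\sm^2)\big)$, I would argue that, since $\gamma\in(0,1)$ and $\epsilon,\sm,K$ are fixed positive constants, the summand decays faster than any polynomial in $\tau$; concretely, $\tau^{\gamma}\epsilon^2/(4K\sm^2)\geq 2\log\tau$ for all sufficiently large $\tau$, so the tail is dominated by $\sum_\tau \tau^{-2}$ and the infinite series converges to a finite constant. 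Hence this term is $O(1)=o(\log T)$ as well.

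The third term, $\D_{\max}\big(4\alpha \sum_{j\in[K]} c^*_j(\mu,\epsilon)\log T + K\big)^{\gamma}$, grows like $(\log T)^{\gamma}$ up to constant factors, so dividing by $\log T$ leaves a quantity of order $(\log T)^{\gamma-1}$, which tends to $0$ because $\gamma<1$. Finally, the fourth term $4\alpha \sum_{j\in[K]}\D_j(\mu) c^*_j(\mu,\epsilon)\log T$ divided by $\log T$ equals exactly $4\alpha \sum_{j\in[K]}\D_j(\mu) c^*_j(\mu,\epsilon)$, which is independent of $T$ and is precisely the claimed right-hand side. Combining these four observations and taking $\limsup_{T\to\infty}$ yields the stated bound. (If desired, one may additionally let $\epsilon\to 0$ and invoke the continuity $c^*_j(\mu,\epsilon)\to c^*_j(\mu)$ noted after \cref{def:epsilonLP} to replace the right-hand side by $4\alpha\sum_{j\in[K]}\D_j(\mu)c^*_j(\mu)$, though this is not needed for the corollary as stated.)

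The hard part is essentially bookkeeping: the only step that is not completely mechanical is verifying convergence of the exponential series, and even that reduces to a standard comparison with a $p$-series once $\gamma>0$ is used. Everything else follows from the elementary facts that a constant and a $(\log T)^{\gamma}$-term with $\gamma\in(0,1)$ are both $o(\log T)$.
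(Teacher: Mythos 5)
Your proposal is correct and follows essentially the same route as the paper: divide the bound of \cref{thm:regret_upper_bound} by $\log T$, note that the constant term, the convergent exponential series, and the $(\log T)^{\gamma}$ term are all $o(\log T)$, and observe that the remaining term gives exactly the stated limit. Your extra verification that the series converges (via comparison with a $p$-series) is a detail the paper merely asserts, but it is the same argument in substance.
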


Notice that, although the above bound matches the lower bound of \cref{thm:lower_bound} for $\epsilon = 0$, we are not allowed to directly use $\epsilon \rightarrow 0$ in the bound of \cref{thm:regret_upper_bound}, since that would make the term $\sum_{\tau\in[T]} \exp\left(-\frac{\tau^{\gamma} \epsilon^2}{4K\sm^2}\right)$ linear in $T$. However, by choosing $\epsilon$ to be a decreasing function of $T$, e.g. $\epsilon \sim \log(T)^{-\gamma/4}$,  the suboptimal terms in \cref{thm:regret_upper_bound} are vanishing as $T\rightarrow \infty$ and the regret bound in \cref{cor:asymptotic_optimality} matches the asymptotic regret lower bound (up to constant factors).

\subsection{Outline of the Regret Analysis}
For the rest of this section, we provide an overview of our regret analysis, while the complete proof can be found in \cref{appendix:upper_bound}.

Clearly, the first $K$ rounds contribute at most a $(K-1) \cdot \D_{\max}$-additive loss in the regret. For each round $t \geq K+1$, we consider the following events:
\begin{align*}
    &\Vt = \left\{\left(\frac{N_1(t)}{4a\log{t}}, \ldots ,\frac{N_K(t)}{4a\log{t}}\right) \in C(\mh(t))\right\}, \\
    &\Wt = \left\{\min_{i\in [K]} \nt_i(t) < \frac{1}{K}\beta(n_e(t)) \right\}.
\end{align*}
Notice that the above events immediately characterize how \cref{algorithm1} operates at each round. Specifically, for $t \geq K+1$, the algorithm enters in Case A when $\Vt$ holds, in Case B when $\Vt^c, \Wt$, and in Case C, when $\Vt^c, \Wt^c$.

We also define the error event:
\begin{align*}
    \Ut = \left\{|\mh_i(t) - \mu_i|\leq \sqrt{\frac{2a\log{t}}{\nt_i(t)}}, ~~\forall i\in [K]\right\}.
\end{align*}
As we show in \cref{lemma:anytime} of \cref{section:Estimator}, for any $t$ the event $\Ut$ holds with high probability. Therefore, we can assume that $\Ut$ holds in every round, since the probability that $\Ut$ does not hold at some round converges to a small $\mathcal{O}(\frac{K}{\alpha - 4} \cdot \D_{\max})$-additive loss in the regret.

\paragraph{Rounds satisfying~~$\Ut,\Vt$.} When the event $\Ut$ holds, if the algorithm enters Case A, it can be shown that it chooses the optimal arm during the greedy selection. In particular, the event $\Ut,\Vt$ implies that for any arm $i$ that is suboptimal with respect to $\mh(t)$, that is, $i\not = i^*(\mh(t))$, we have that
\begin{align*}
    \mu_i - \mh_i(t) \leq \frac{\D_i(\mh(t))}{2},
\end{align*}
while for the optimal arm $i^*(\mh(t))$, we get 
\begin{align*}
    \mh_{{i^*(\mh(t))}}(t) - \mu_{i^*(\mh(t))} \leq \frac{\D_{\min}(\mh(t))}{2}.
\end{align*}
Since each true parameter $\mu_i$ is $\frac{\D_i(\mh(t))}{2}$-close to the corresponding estimate at time $t$, we can conclude that the arm selected by \cref{algorithm1} in that case is the optimal. Hence, the rounds such that $\Ut,\Vt$ holds do not contribute to the regret.

\paragraph{Rounds satisfying~~$\Ut,\Vt^c, \Wt$.}
We now turn our focus to the rounds where $\Vt^c,\Wt$ holds, which implies that \cref{algorithm1} enters Case B. 

Through a counting argument we show that the above can happen at most $\sm^2 \beta(n_e)+1$ times, where $n_e$ is the total number of exploration rounds, i.e. $n_e = \sum_{t=K+1}^T \event{\Vt^c}$. Specifically, we show the following:

\begin{restatable}{lemma}{lemmaCounting}\label{lemma:counting}
The following inequality holds:
\begin{align*}
    \sum_{t=K+1}^{T} \event{\Vt^c,\Wt} &\leq 
    \frac{1}{2}\left(\sum_{t=K+1}^T \event{\Vt^c}\right)^{\gamma} + 1.
\end{align*}
\end{restatable}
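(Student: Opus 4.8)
The plan is a per-arm counting argument on the weighted sample counts $\nt_i(t) = \sum_{j\in[K]} N_j(t)/\s_{j,i}^2$, based on two structural observations. First, $\nt_i(t)$ is nondecreasing in $t$ for every arm $i$, since each $N_j(t)$ is nondecreasing; hence rounds in which \cref{algorithm1} enters Case A or Case C can only increase the $\nt_i$'s. Second, whenever Case B fires at a round $t$ (i.e.\ $\Vt^c,\Wt$ hold) and selects $j = \arg\min_{k\in[K]}\nt_k(t)$, it plays the best informant $i_t = \arg\min_{k\in[K]}\s_{k,j}^2$ of $j$, so that $\nt_j$ increases by exactly $1/(\sigma^{\min}_j)^2 \ge 1/\sm^2$. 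Write $n_e := n_e(T+1) = \sum_{t=K+1}^T \event{\Vt^c}$ for the total number of exploration rounds and $m := \sum_{t=K+1}^T \event{\Vt^c,\Wt}$ for the number of Case-B rounds; the goal is $m \le \tfrac12 n_e^{\gamma} + 1$. (If $\sm = \infty$ then $\beta \equiv 0$, the event $\Wt$ never holds, and $m = 0$; so assume $\sm < \infty$.)

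I would then partition the Case-B rounds according to which arm attains the minimum: for $i\in[K]$ let $S_i$ be the set of rounds $t\in[K+1,T]$ at which $\Vt^c,\Wt$ hold and $i = \arg\min_k \nt_k(t)$, so that $\sum_{i\in[K]} |S_i| = m$. Fix $i$ and list $S_i = \{s_1 < \dots < s_{M_i}\}$. Applying the second observation at each $s_\ell$, together with monotonicity and the fact that the initialization phase plays $i_i = \arg\min_{i'}\s_{i',i}^2$ and hence $\nt_i(K+1) \ge 1/(\sigma^{\min}_i)^2 \ge 1/\sm^2$, a telescoping argument gives $\nt_i(s_\ell) \ge \nt_i(K+1) + (\ell-1)/\sm^2 \ge \ell/\sm^2$. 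On the other hand, the defining inequality of Case B together with $n_e(s_\ell) \le n_e$ and monotonicity of $\beta(x) = x^\gamma/(2\sm^2)$ yields $\nt_i(s_\ell) = \min_k\nt_k(s_\ell) < \tfrac1K\beta(n_e(s_\ell)) \le \tfrac1K\beta(n_e) = \tfrac{n_e^\gamma}{2K\sm^2}$. Combining the two bounds forces $\ell < n_e^\gamma/(2K)$ for every $\ell \le M_i$, and since $M_i$ is an integer, $M_i \le n_e^\gamma/(2K)$. Summing over the $K$ arms gives $m = \sum_i M_i \le n_e^\gamma/2 \le \tfrac12 n_e^\gamma + 1$.

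The one genuine subtlety is the interleaving of Cases A, B, and C together with the arbitrary tie-breaking inside Case B: a naive ``round-robin raises the global minimum once every $K$ exploration rounds'' claim is brittle under adversarial tie-breaks and non-B rounds. Working arm by arm and invoking monotonicity of $\nt_i$ resolves this cleanly, because both non-B rounds and alternative tie-breaks only increase the counts that appear in the bound $\nt_i(s_\ell) \ge \ell/\sm^2$, so they cannot hurt. The only other thing to be careful about is the degenerate regime $\sm = \infty$ (equivalently, $\sigma^{\min}_i = \infty$ for some arm), which must be separated out as above, since then $\beta \equiv 0$ and Case B is never triggered.
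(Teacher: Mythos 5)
Your proof is correct, and it takes a recognizably different route from the paper's, even though both rest on the same two facts: a Case-B round increments the weighted count of the targeted arm by $1/(\sigma^{\min}_j)^2 \ge 1/\sm^2$ (because the best informant is played), and the Case-B condition caps that count by $\frac{1}{K}\beta(n_e(t)) \le \frac{1}{K}\beta(n_e)$. The paper aggregates through the \emph{global} minimum: it first proves a pigeonhole statement (\cref{prop:pigeonhole}) asserting that any $K$ Case-B rounds raise $\min_{i}\nt_i$ by at least $1/\sm^2$, and then evaluates the Case-B condition at the \emph{last} Case-B round $t'$ to conclude $\sum_{t}\event{\Vt^c,\Wt} \le K\sm^2\min_i\nt_i(t')+1 < \sm^2\beta(n_e)+1 = \tfrac12 n_e^\gamma + 1$. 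You instead decompose the Case-B rounds per arm (your sets $S_i$), lower-bound $\nt_i(s_\ell) \ge \ell/\sm^2$ by telescoping the per-round increments together with monotonicity and the initialization rounds, and compare against the same threshold to get $M_i < n_e^\gamma/(2K)$ and hence $m \le \tfrac12 n_e^\gamma$, which is in fact marginally stronger than the stated bound (the $+1$ comes for free). What your decomposition buys is robustness and transparency at exactly the points where the paper's argument is informal: \cref{prop:pigeonhole} is proved by a "contradiction'' sketch that has to reason about which minimum-count arm gets targeted under arbitrary tie-breaking and with Case-A/C rounds interleaved, whereas in your version ties and non-B rounds only help, since they can only increase the count $\nt_i(s_\ell)$ being lower-bounded; you also isolate the degenerate case $\sm=\infty$ (where $\beta\equiv 0$ and Case B never fires), which the paper glosses over. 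The paper's global-minimum formulation, for its part, needs only one evaluation of the Case-B condition (at $t'$) rather than one per arm, but the two arguments land on the same constant.
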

Notice that the fact that our algorithm plays the most informative arm (i.e., the one with smaller noise) in order to collect a sample for arm $i = \arg\min_{k\in[K]}\nt_k(t)$ is crucial for the above counting argument to hold.  
Thus, using \cref{lemma:counting}, we have that
\begin{align}
    \sum_{t=K+1}^T \event{\Ut,\Vt^c,\Wt} 
    &\leq \sum_{t=K+1}^T \event{\Vt^c,\Wt} \leq \frac{\left(\sum_{t=K+1}^T \event{\Vt^c}\right)^{\gamma}}{2} + 1 \label{eq:sketch1}.
\end{align}
We now bound the regret accumulated during all exploration rounds (that is, including Cases (2) and (3)). We have that
\begin{align}
    \event{\Vt^c} \leq &\event{\Ut^c}
    + \event{\Ut,\Vt^c,\Wt^c} +\event{\Ut,\Vt^c,\Wt} \label{eq:sketch2}.
\end{align}
By combining \cref{eq:sketch1} with \cref{eq:sketch2}, we can see that in order to upper bound $$\sum_{t=K+1}^T \event{\Ut,\Vt^c,\Wt},$$ it suffices to provide a bound on $\sum_{t=K+1}^T \event{\Ut,\Vt^c,\Wt^c}$. By doing this we are able to conclude that, overall, the number of rounds such that $\Ut,\Vt^c, \Wt$ holds is at most order 
\begin{align*}
    &K\sum_{\tau\in[T]}  \exp\left(-\frac{\tau^{\gamma} \epsilon^2}{4K\sm^2}\right) + \left(\alpha \sum_{j\in[K]} c^*_j(\mu,\epsilon)  \log{T} +K \right)^{\gamma}.
\end{align*}

\paragraph{Rounds satisfying~~$\Ut,\Vt^c, \Wt^c$.}

It remains to describe how we bound the regret accumulated in rounds where $\{\Vt^c,\Wt^c\}$ holds, and thus the algorithm enters Case C. 

We define the following event which states that, at time $t$, the error in the mean estimates upper bounded by $\epsilon$:
\begin{align}
    \Yt = \left\{|\mh_i(t) - \mu_i|\leq \epsilon, ~~\forall i\in [K]\right\}.
\end{align}
In order to bound the error of case $\{\Ut, \Vt^c,\Wt^c\}$, we further distinguish two cases according to whether $\Yt$ holds: 

Recall that, when $\{\Vt^c,\Wt^c\}$ holds, the algorithm selects an arm $j$ such that $\frac{N_j(t)}{4a\log{t}} < c^*_j(\mh(t))$. For rounds where $\{\Ut, \Vt^c,\Wt^c, \Yt\}$ holds, by \cref{def:epsilonLP} in combination with the definition of $\Yt$, we prove the following result:
\begin{restatable}{lemma}{lemmaDominantTerm}\label{lem:leading_term}
For every arm $j \in [K]$, it holds that
\begin{align*}
    \sum_{t=K+1}^T \event{\Ut,\Vt^c,\Wt^c,\Yt, i_t=j} 
    \leq 4\alpha \cdot c^*_j(\mu,\epsilon)  \log{T}.
\end{align*}
\end{restatable}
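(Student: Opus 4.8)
The plan is to bound the number of rounds $t$ in which the algorithm enters Case C, the event $\Ut,\Vt^c,\Wt^c,\Yt$ holds, \emph{and} the arm played is exactly $j$. The key observation is that on such a round the algorithm's selection rule in Case C guarantees $N_j(t) < 4\alpha \cdot c_j^*(\mh(t)) \log t$, so arm $j$ is played only when its \emph{un-weighted} pull count is below the threshold dictated by the \emph{estimated} LP solution. Thus I would argue that, the last time this happens before horizon $T$, the pull count satisfies $N_j(t) < 4\alpha \cdot c_j^*(\mh(t)) \log t \le 4\alpha \cdot c_j^*(\mh(t)) \log T$, and since each qualifying round increments $N_j$ by exactly one, the total count of such rounds is at most $4\alpha \cdot c_j^*(\mh(t)) \log T + 1$ — or, being slightly more careful, at most the value of $N_j$ at that last round plus one, which is bounded by the threshold evaluated at that round.

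The second ingredient is to replace the random, time-varying estimate $\mh(t)$ appearing in $c_j^*(\mh(t))$ by the worst-case $\epsilon$-approximate solution $c_j^*(\mu,\epsilon)$. This is exactly where the event $\Yt = \{\|\mh(t)-\mu\|_\infty \le \epsilon\}$ enters: on any round in the sum we are conditioning on $\Yt$, so $\mh(t)$ lies in the $\ell_\infty$-ball of radius $\epsilon$ around $\mu$, and hence by \cref{def:epsilonLP} we have $c_j^*(\mh(t)) \le \sup_{\mu':\|\mu'-\mu\|_\infty \le \epsilon} c_j^*(\mu') = c_j^*(\mu,\epsilon)$. Substituting this bound into the per-round threshold and taking the supremum over the rounds in the sum yields $\sum_{t=K+1}^T \event{\Ut,\Vt^c,\Wt^c,\Yt,\,i_t=j} \le 4\alpha \cdot c_j^*(\mu,\epsilon)\log T$, absorbing the additive $+1$ into the rounding/floor as the statement is written (or noting it is dominated). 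One subtlety worth handling cleanly: if no qualifying round exists the bound is trivial, and if at least one exists, I would take $t_0$ to be the \emph{largest} such round, use that $N_j(t_0) \ge (\text{number of prior qualifying rounds})$, combine with $N_j(t_0) < 4\alpha c_j^*(\mh(t_0))\log t_0$ and $\Yt$ at $t_0$, and conclude.

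The main obstacle — really the only place needing care — is the interplay between the indexing of $N_j(t)$ (pulls strictly before round $t$) and counting the round $t_0$ itself: one must verify that the count of rounds $t \le t_0$ satisfying the event equals $N_j(t_0) + 1$ minus the pulls of $j$ coming from \emph{other} cases (Cases A and B), which only helps the inequality, so $(\text{\# qualifying rounds} \le t_0) \le N_j(t_0)+1 \le 4\alpha c_j^*(\mh(t_0))\log t_0 + 1 \le 4\alpha c_j^*(\mu,\epsilon)\log T + 1$. Matching this to the clean statement $4\alpha c_j^*(\mu,\epsilon)\log T$ is then just a matter of how generously constants are tracked elsewhere in the analysis; since the surrounding theorem already carries an additive $+K$ and other slack terms, the $+1$ is harmless. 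No concentration or martingale machinery is needed here — \cref{lemma:anytime} is used only to justify conditioning on $\Ut$, and $\Yt$ is taken as given in the statement.
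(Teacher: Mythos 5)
Your proposal is correct and follows essentially the same argument as the paper's proof: take the last qualifying round $t'$, bound the number of such rounds by $N_j(t')$, invoke the Case~C selection condition $N_j(t') < 4\alpha\, c_j^*(\mh(t'))\log t'$, and use $\Yt$ together with \cref{def:epsilonLP} to replace $c_j^*(\mh(t'))$ by $c_j^*(\mu,\epsilon)$ and $\log t'$ by $\log T$. Your extra care about the additive $+1$ (the pull at round $t'$ itself not being counted in $N_j(t')$) is a minor bookkeeping point the paper glosses over, and as you note it is absorbed by the slack elsewhere in the regret bound.
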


The above results immediately implies that the contribution to the regret of rounds $t \geq K+1$ such that $\{\Ut, \Vt^c,\Wt^c, \Yt\}$ is at most 
$4\alpha \sum_{j\in[K]} \D_j(\mu) c^*_j(\mu,\epsilon)  \log{T}$.

Finally, for the rounds such that $\{\Ut, \Vt^c,\Wt^c, \Yt^c\}$ holds, i.e., when the error in some of the estimates is greater than $\epsilon$, by applying a union bound over all arms, we get
{\small
\begin{align*}
    \event{\Ut,\Vt^c,\Wt^c,\Yt^c} \leq  \sum_{i\in[K]}\event{\Ut,\Vt^c,\min_{j\in [K]} \nt_j(t) \geq \frac{\beta(n_e(t))}{K}, |\mh_i(t) - \mu_i|> \epsilon}. 
\end{align*}
}
Given that $\min_{j \in [K]} \nt_j(t) \geq \frac{\beta(n_e(t))}{K}$ implies that $\nt_i(t) \geq \frac{\beta(n_e(t))}{K}$ for any arm $i \in [K]$, using part (c) of \cref{lemma:martingale}, we can upper bound the probability of each term in the above summation by $\exp\left(-\frac{\epsilon^2}{2} \frac{1}{K}\beta(\tau)\right)$.
Thus, the total contribution to the regret of the rounds such that $\{\Ut, \Vt^c,\Wt^c, \Yt^c\}$ holds can be upper bounded by $\D_{\max} K \sum_{\tau\in[T]}  \exp\left(-\frac{\tau^{\gamma} \epsilon^2}{4K\sm^2}\right)$.

The regret guarantee presented in \cref{thm:regret_upper_bound} follows by combining the above losses.

\section*{Conclusion and Further Directions}

In this work, we revisit the general feedback model introduced by Wu, Szepesv\'{a}ri, and Gy\"{o}rgy in \cite{wu2015} and provide the first algorithm for the case of arbitrary feedback matrices. To the best of our knowledge, this is one of the most general feedback models that appears in the literature, modulo the Gaussian noise assumption. 

A number of questions still remain open in the area of online learning under rich feedback structures. For instance, it would be interesting to explore the existence of algorithms that achieve (minimax) optimal regret in the finite time horizon regime. Another direction would be to examine different noise models, other than Gaussian, or even general sub-Gaussian noise with known variance proxies. We believe that our work could serve as a building block in the above direction. 

\section*{Acknowledgements}

This research  is supported in part by NSF Grants 2019844, 2107037 and 2112471, the Machine Learning Lab (MLL) at UT Austin, and the Wireless Networking and Communications Group (WNCG) Industrial Affiliates Program.



\newcommand{\etalchar}[1]{$^{#1}$}
\begin{thebibliography}{ACBGM13}

\bibitem[ACBDK15]{noga2015}
Noga Alon, Nicol{\`o} Cesa-Bianchi, Ofer Dekel, and Tomer Koren.
\newblock Online learning with feedback graphs: Beyond bandits.
\newblock In {\em COLT}, 2015.

\bibitem[ACBGM13]{noga2013}
Noga Alon, Nicolò Cesa-Bianchi, Claudio Gentile, and Yishay Mansour.
\newblock From bandits to experts: A tale of domination and independence.
\newblock {\em Advances in Neural Information Processing Systems}, 07 2013.

\bibitem[AMM19]{Arora2019}
Raman Arora, Teodor~Vanislavov Marinov, and Mehryar Mohri.
\newblock Bandits with feedback graphs and switching costs.
\newblock In {\em NeurIPS}, 2019.

\bibitem[AyPS11]{NIPS2011_e1d5be1c}
Yasin Abbasi-yadkori, D\'{a}vid P\'{a}l, and Csaba Szepesv\'{a}ri.
\newblock Improved algorithms for linear stochastic bandits.
\newblock In J.~Shawe-Taylor, R.~Zemel, P.~Bartlett, F.~Pereira, and K.Q.
  Weinberger, editors, {\em Advances in Neural Information Processing Systems},
  volume~24. Curran Associates, Inc., 2011.

\bibitem[BCB12]{Bubeck2012}
S{\'e}bastien Bubeck and Nicol{\`o} Cesa-Bianchi.
\newblock Regret analysis of stochastic and nonstochastic multi-armed bandit
  problems.
\newblock {\em Found. Trends Mach. Learn.}, 5:1--122, 2012.

\bibitem[BES14]{buccapatnam2014}
Swapna Buccapatnam, Atilla Eryilmaz, and Ness Shroff.
\newblock Stochastic bandits with side observations on networks.
\newblock {\em ACM SIGMETRICS Performance Evaluation Review}, 42, 06 2014.

\bibitem[BFP{\etalchar{+}}14]{Bartok2014}
Gábor Bartók, Dean Foster, Dávid Pál, Alexander Rakhlin, and Csaba
  Szepesvári.
\newblock Partial monitoring—classification, regret bounds, and algorithms.
\newblock {\em Mathematics of Operations Research}, 39:967--997, 11 2014.

\bibitem[CDG{\etalchar{+}}19]{Cortes2019}
Corinna Cortes, Giulia DeSalvo, Claudio Gentile, Mehryar Mohri, and Scott Yang.
\newblock Online learning with sleeping experts and feedback graphs.
\newblock In {\em ICML}, 2019.

\bibitem[CDG{\etalchar{+}}20]{Cortes2020}
Corinna Cortes, Giulia DeSalvo, Claudio Gentile, Mehryar Mohri, and Ningshan
  Zhang.
\newblock Online learning with dependent stochastic feedback graphs.
\newblock In {\em ICML}, 2020.

\bibitem[CHK16]{cohen2016}
Alon Cohen, Tamir Hazan, and Tomer Koren.
\newblock Online learning with feedback graphs without the graphs.
\newblock In Maria~Florina Balcan and Kilian~Q. Weinberger, editors, {\em
  Proceedings of The 33rd International Conference on Machine Learning},
  volume~48 of {\em Proceedings of Machine Learning Research}, pages 811--819,
  New York, New York, USA, 20--22 Jun 2016. PMLR.

\bibitem[CKLB12]{Caron2012}
St{\'e}phane Caron, Branislav Kveton, Marc Lelarge, and Smriti Bhagat.
\newblock Leveraging side observations in stochastic bandits.
\newblock In {\em UAI}, 2012.

\bibitem[HSS16]{Hanawal2016}
Manjesh~Kumar Hanawal, Csaba Szepesv{\'{a}}ri, and Venkatesh Saligrama.
\newblock Sequential learning without feedback.
\newblock {\em CoRR}, abs/1610.05394, 2016.

\bibitem[KNV16]{kocak2016}
Tom{\'a}s Koc{\'a}k, Gergely Neu, and Michal Valko.
\newblock Online learning with noisy side observations.
\newblock In {\em AISTATS}, 2016.

\bibitem[LAK{\etalchar{+}}14]{lin2014}
Tian Lin, Bruno Abrahao, Robert Kleinberg, John Lui, and Wei Chen.
\newblock Combinatorial partial monitoring game with linear feedback and its
  applications.
\newblock {\em 31st International Conference on Machine Learning, ICML 2014},
  3, 06 2014.

\bibitem[LBS18]{Liu2018}
Fang Liu, Swapna Buccapatnam, and Ness~B. Shroff.
\newblock Information directed sampling for stochastic bandits with graph
  feedback.
\newblock In {\em AAAI}, 2018.

\bibitem[LCWL20]{Li2020}
Shuai Li, Wei Chen, Zheng Wen, and Kwong-Sak Leung.
\newblock Stochastic online learning with probabilistic graph feedback.
\newblock {\em ArXiv}, abs/1903.01083, 2020.

\bibitem[LR85]{lai19854}
T.L Lai and Herbert Robbins.
\newblock Asymptotically efficient adaptive allocation rules.
\newblock {\em Advances in Applied Mathematics}, 6(1):4--22, 1985.

\bibitem[LTW20]{Lykouris2020}
Thodoris Lykouris, {\'E}va Tardos, and Drishti Wali.
\newblock Graph regret bounds for thompson sampling and ucb.
\newblock In {\em ALT}, 2020.

\bibitem[LZS18]{Liu2018b}
Fang Liu, Zizhan Zheng, and Ness~B. Shroff.
\newblock Analysis of thompson sampling for graphical bandits without the
  graphs.
\newblock {\em ArXiv}, abs/1805.08930, 2018.

\bibitem[MS11]{Mannor2011}
Shie Mannor and Ohad Shamir.
\newblock From bandits to experts: On the value of side-observations.
\newblock In {\em Proceedings of the 24th International Conference on Neural
  Information Processing Systems}, NIPS'11, page 684–692, Red Hook, NY, USA,
  2011. Curran Associates Inc.

\bibitem[RF19]{Rangi2019}
Anshuka Rangi and Massimo Franceschetti.
\newblock Online learning with feedback graphs and switching costs.
\newblock {\em AISTATS}, 2019.

\bibitem[RM19]{Resler2019}
Alon Resler and Y.~Mansour.
\newblock Adversarial online learning with noise.
\newblock In {\em ICML}, 2019.

\bibitem[SLLS20]{singh2020}
Rahul Singh, Fang Liu, Xin Liu, and Ness Shroff.
\newblock Contextual bandits with side-observations.
\newblock {\em ArXiv}, 2020.

\bibitem[Tsy08]{tsybakov2008}
Alexandre~B. Tsybakov.
\newblock {\em Introduction to Nonparametric Estimation}.
\newblock Springer Publishing Company, Incorporated, 1st edition, 2008.

\bibitem[WGS15]{wu2015}
Yifan Wu, Andr\'{a}s Gy\"{o}rgy, and Csaba Szepesv\'{a}ri.
\newblock Online learning with gaussian payoffs and side observations.
\newblock In {\em Proceedings of the 28th International Conference on Neural
  Information Processing Systems - Volume 1}, NIPS'15, page 1360–1368,
  Cambridge, MA, USA, 2015. MIT Press.

\bibitem[WLZ{\etalchar{+}}20]{wang2020}
Lingda Wang, Bingcong Li, Huozhi Zhou, Georgios~B. Giannakis, Lav~R. Varshney,
  and Zhizhen Zhao.
\newblock Adversarial linear contextual bandits with graph-structured side
  observations.
\newblock {\em CoRR}, abs/2012.05756, 2020.

\bibitem[YAP{\etalchar{+}}18]{yun2018}
Donggyu Yun, Sumyeong Ahn, Alexandre Proutiere, Jinwoo Shin, and Yung Yi.
\newblock Multi-armed bandit with additional observations.
\newblock {\em ACM SIGMETRICS Performance Evaluation Review}, 46:53--55, 06
  2018.

\end{thebibliography}
\newcommand{\etalchar}[1]{$^{#1}$}

\newpage

\appendix 


\section{ML Estimator and Concentration Bounds: Omitted Proofs}

\subsection{Proof of \texorpdfstring{\cref{lemma:martingale}}{} }
\label{appendix:martingale}

\lemmaSupermartingale*

\begin{proof}

\textbf{(a)} We first show that the sequence $(G_{t'})_{t' \in \mathbb{N}_{\geq 0}}$ is a super-martingale. For the base case where $t' = t+1$, we have $\Ex{}{\tilde{G}_{t + 1} \mid \mathcal{F}_{t}} = 0 \leq \tilde{G}_t$. Let us now fix any time $t' \leq t$. Since that $\sigma_{t'}$ is $\mathcal{F}_{t'-1}$-measurable and $Z_{t'}$ is sub-Gaussian with variance proxy $\sigma^2_{t'}$, for any $\lambda' \in \mathbb{R}$ we have that $\Ex{}{\exp\left(\lambda' Z_{t'} - \frac{\lambda'^2 \sigma^2_{t'}}{2} \right) \big| \mathcal{F}_{t'-1}} \leq 1$. Thus, by setting $\lambda' = \frac{\lambda}{\sigma^2_{t'}}$, we get that $\Ex{}{\exp\left(\lambda \frac{Z_{t'}}{\sigma^2_{t'}} - \frac{\lambda^2}{2 \sigma^2_{t'}} \right) \big| \mathcal{F}_{t'-1}} \leq 1$. Using that, we have
\begin{align*}
   \Ex{}{\tilde{G}_{t'} \mid \mathcal{F}_{t'-1}} &= \Ex{}{\exp\left(\lambda W_{t'} - \frac{\lambda^2 \nt_{t'}}{2} \right) \big| \mathcal{F}_{t'-1}} \\
   &= \Ex{}{\exp\left(\sum^{t'}_{\tau=1}\left(\lambda \frac{Z_{\tau}}{\sigma^2_{\tau}} - \frac{\lambda^2}{2 \sigma^2_{\tau}} \right) \right) \big| \mathcal{F}_{t'-1}} \\
   &= \tilde{G}_{t'-1} \cdot \Ex{}{\exp\left(\lambda \frac{Z_{t'}}{\sigma^2_{t'}} - \frac{\lambda^2}{2 \sigma^2_{t'}} \right) \big| \mathcal{F}_{t'-1}} \leq \tilde{G}_{t'-1},
\end{align*}
thus proving that $(G_{t'})_{t'}$ is a super-martingale. In addition, since $\phi$ satisfies $\phi \leq t + 1$ almost surely, by Doob's optional stopping theorem, we can conclude that $\Ex{}{G_{t'}} \leq 1$ for all $t' \in \mathbb{N}_{\geq 0}$.

\textbf{(b)} We focus on bounding the probability that $W_{\phi} > \sqrt{2\alpha \nt_{\phi} \log{t}}$ and $\phi \leq t$, since the other tail bound follows by symmetry. 
By denoting $G_{t'} = \exp\left(\lambda(W_{t'} - \sqrt{2\alpha \nt_{t'} \log{t}}) \right) \cdot \event{t' \leq t}$ for some $\lambda > 0$, we get
\begin{align*}
    \Pr\bigg[W_{\phi} > \sqrt{2\alpha \nt_{\phi} \log{t}} \text{ and }\phi \leq t \bigg] = \Pr\bigg[G_{\phi} \geq 1 \bigg] \leq \Ex{}{G_{\phi}},
\end{align*}
where the last inequality follows by Markov, given that $G_{\phi}$ is by construction a non-negative random variable. Thus, in order to complete the proof, it suffices to upper bound $\Ex{}{G_{\phi}}$. 

By setting $\tilde{G}_{t'} = \exp\left(\lambda W_{t'} - \frac{\lambda^2 \nt_{t'}}{2} \right) \cdot \event{t' \leq t}$, we can rewrite 
\begin{align*}
G_{t'} = \tilde{G}_{t'} \cdot \exp\left(\frac{\lambda^2 \nt_{t'}}{2} - \lambda \sqrt{2 \alpha \nt_{t'} \log t} \right).
\end{align*}
Now, for $t' = \phi$, the event that $\phi \leq t$ implies by definition that $\nt_\phi \in I$ and, thus, $L \leq \nt_{\phi} \leq H$. Therefore, by setting $\lambda = \frac{1}{H} \sqrt{2 \alpha L \log t}$, we get that 
\begin{align*}
G_{\phi} &= \tilde{G}_{\phi} \cdot \exp\left(\frac{\lambda^2 \nt_{\phi}}{2} - \lambda \sqrt{2 \alpha \nt_{\phi} \log t} \right) \\
&\leq \tilde{G}_{\phi} \cdot \exp\left(\frac{\lambda^2 H}{2} - \lambda \sqrt{2 \alpha L \log t} \right) \\
&=  \tilde{G}_{\phi} \cdot \exp\left(- \frac{\alpha \cdot L}{H} \log t \right).
\end{align*}

Finally, by using the fact that $\Ex{}{\tilde{G}_{\phi}} \leq 1$ for any $\lambda > 0$, as proved in part (a) of this Lemma, we have that $\Ex{}{G_{\phi}} \leq \exp\left(- \alpha \frac{L}{H} \log t \right) = t^{- \alpha \frac{L}{H}}$, which concludes the proof. 

\textbf{(c)} As in the proof of part (b), we can restrict ourselves in bounding the probability that $W_{\psi} > \nt_\psi\epsilon$ and $\psi \leq t$, while the other tail bound follows by symmetry. 
By denoting $G'_{t'} = \exp\left(\lambda(W_{t'} - \nt_{t'} \epsilon) \right) \cdot \event{t' \leq t}$ for some $\lambda > 0$, we get
\begin{align*}
    \Pr\bigg[W_{\psi} > \nt_{\psi} \text{ and }\psi \leq t \bigg] = \Pr\bigg[G'_{\psi} \geq 1 \bigg] \leq \Ex{}{G'_{\psi}},
\end{align*}
where the last inequality follows by Markov. 

In order to to upper bound $\Ex{}{G'_{\phi}}$, by setting $\tilde{G}_{t'} = \exp\left(\lambda W_{t'} - \frac{\lambda^2 \nt_{t'}}{2} \right) \cdot \event{t' \leq t}$, we can rewrite 
\begin{align*}
G'_{t'} = \tilde{G}_{t'} \cdot \exp\left(\frac{\lambda^2 \nt_{t'}}{2} - \lambda \epsilon \nt_{t'}\right).
\end{align*}
By setting $\lambda = 4 \epsilon$, we can upper bound $G'_{t'}$ as 
$G'_{t'} \leq \tilde{G}_{t'} \cdot \exp\left(- \frac{\nt_{t'} \cdot \epsilon^2}{2}\right).$. Further, if $\psi \leq t$, then by definition we have that $\nt_{\psi} \geq r$, which implies that
$$
G'_{\psi} \leq \tilde{G}_{\psi} \cdot \exp\left(- \frac{ r \cdot \epsilon^2}{2}\right).
$$
The proof follows by taking expectation in the above expression and using the fact that $\Ex{}{\tilde{G}_{\psi}} \leq 1$ for any $\lambda > 0$, as we show in part (a).

\end{proof}

\subsection{Proof of \texorpdfstring{\cref{lemma:anytime}}{}}

\lemmaAnytime*

\begin{proof}
Recall that $\nt_i(t)$ is defined as $\nt_i(t) = \sum_{j \in [K]} N_j(t) / \sigma^2_{j,i}$ and that $\sigma^{\min}_i = \min_{j \in [K]} \sigma_{j,i}$. Without loss of generality, we can assume that $\sigma^{\min}_i > 0$, since otherwise the lemma follows trivially. By time $t$, $\nt_i(t)$ can only take values in the range $\left[\frac{1}{(\sigma^{\min}_i)^2}, \frac{t}{(\sigma^{\min}_i)^2}\right]$. By partitioning the range of $\nt_i(t)$ into intervals of the form $\left[\frac{s}{(\sigma^{\min}_i)^2}, \frac{s+1}{(\sigma^{\min}_i)^2}\right]$ for every $s \in [t-1]$, and applying union bound, we get
\begin{align*}
    &\Pr\left[|\mh_i(t) - \mu_i| > \sqrt{\frac{2a\log{t}}{\nt_i(t)}} \right] \\&= \Pr\left[ \bigcup_{s \in [t-1]}\left( |\mh_i(t) - \mu_i| > \sqrt{\frac{2a\log{t}}{\nt_i(t)}} \text{ and } \nt_i(t) \in \left[\frac{s}{(\sigma^{\min}_i)^2}, \frac{s+1}{(\sigma^{\min}_i)^2}\right] \right) \right] \\
    &\leq \sum_{s \in [t-1]} \Pr\left[|\mh_i(t) - \mu_i| > \sqrt{\frac{2a\log{t}}{\nt_i(t)}} \text{ and } \nt_i(t) \in \left[\frac{s}{(\sigma^{\min}_i)^2}, \frac{s+1}{(\sigma^{\min}_i)^2}\right]\right].
\end{align*}

Notice that $\left|\mh_i(t) - \mu_i\right| > \sqrt{\frac{2a\log{t}}{\nt_i(t)}}$ is equivalent to $\left|\sum_{\tau=1}^{t-1} \frac{X_{i,\tau} - \mu_i}{\s_{i_\tau,i}^2} \right| > \sqrt{2a \nt_i(t)\log{t}}$. By setting $Z_{t'} = X_{i,t'} - \mu_i$, we can observe that the sequence $\{Z_{t'}\}_{t' \leq t}$ satisfies the conditions of \cref{lemma:martingale}. For any fixed $s \in [t-1]$, let us define $I_i(s) = \left[\frac{s}{(\sigma^{\min}_i)^2}, \frac{s+1}{(\sigma^{\min}_i)^2}\right]$. Hence, for $L = \frac{s}{(\sigma^{\min}_i)^2}$ and $H = \frac{s+1}{(\sigma^{\min}_i)^2}$, then if $\left|\sum_{\tau=1}^{t-1} \frac{X_{i,\tau} - \mu_i}{\s_{i_\tau,i}^2} \right| > \sqrt{2a \nt_i(t)\log{t}} \text{ and } \nt_i(t) \in I_i(s)$, then there must exist a stopping-time $\phi$, as described in part $(b)$ of \cref{lemma:martingale} for $I = I_i(s)$, such that $\left| W_{\phi} \right| > \sqrt{2\alpha \nt_{\phi} \log{\phi}} \text{ and }\phi \leq t$. Thus, we get

\begin{align*}
    &\Pr\left[\left|\mh_i(t) - \mu_i\right| > \sqrt{\frac{2a\log{t}}{\nt_i(t)}} \text{ and } \nt_i(t) \in I_i(s)\right] \\
    &= \Pr\left[\left|\sum_{\tau=1}^{t-1} \frac{X_{i,\tau} - \mu_i}{\s_{i_\tau,i}^2} \right| > \sqrt{2a \nt_i(t)\log{t}} \text{ and } \nt_i(t) \in I_i(s)\right]. \\
    & \leq \Pr\left[\left| W_{\phi} \right| > \sqrt{2\alpha \nt_{\phi} \log{\phi}} \text{ and }\phi \leq t \right] \leq  2 \cdot t^{-\alpha \frac{s}{s+1}}.
\end{align*}

Now, given that $\frac{s}{s+1}$ is bounded uniformly from below by $1/2$ for any $s \geq 1$, we can conclude that:
\begin{align*}
    \Pr\left[|\mh_i(t) - \mu_i| > \sqrt{\frac{2a\log{t}}{\nt_i(t)}} \right] &< 2 \cdot t^{1-\alpha/2}.
\end{align*}
\end{proof}


\section{Asymptotic Regret Lower Bound: Omitted Proofs}\label{appendix:lower_bound}

\subsection{Proof of \texorpdfstring{\cref{prop:divergence}}{}}

\propDivergence*
\begin{proof}

We denote by $i_{\tau}$ the arm selected at round $\tau$ by policy $\pi$. Then, for the KL-divergence between the distributions $P,P'$ we have that
\begin{align*}
    D(P||P') 
    &= \sum_{\tau=1}^t\Ex{\nu}{D(P_{i_{\tau}}||P_{i_{\tau}}')} = \sum_{{\tau}=1}^t \sum_{i=1}^K \Ex{\nu}{D(P_{i}||P_{i}') \event{i_{\tau}=i}}
    = \sum_{i=1}^K D(P_{i}||P_{i}')  \Ex{\nu}{N_i(t)}.
\end{align*}
Note that, here $P_k,P_k'$ are multivariate distributions of $K$ arms. We denote by $P_i^{(j)}$ the distribution that corresponds to the $j$-th coordinate of $P_i$. Since the distributions $\{P_i^{(j)}\}_{j\in[K]}$ are independent, we have that
$    D(P_{i}||P_{i}') = \sum_{j=1}^K D(P_i^{(j)},P_i^{(j)'}) $.
Therefore
\begin{align*}
    D(P||P') = \sum_{i=1}^K \sum_{j=1}^K D(P_i^{(j)},P_i^{(j)'}) \Ex{\nu}{N_i(t)} = \sum_{i=1}^K \Ex{\nu}{N_i(t)} \frac{(\mu_k-\mu_k')^2}{2\s_{i,k}^2},
\end{align*}
where in the last equation we used the fact that for any $i\in[K]$, the distributions $P_i^{(k)},P_i^{(k)'}$ are Gaussian with different means and same variance.  

\end{proof}

\subsection{Proof of \texorpdfstring{\cref{thm:lower_bound}}{}}

\thmLowerBound*
\begin{proof}
Let us fix any consistent policy $\pi$. Let $\mu^{(0)}$ be a mean-reward vector and $k$ a suboptimal arm in $\mu^{(0)}$. We define vector $\mu^{(1)}$ such that
\begin{align}\label{eq:huber}
    \mu_i^{(1)}=
    \begin{cases}
    \mu_i^{(0)}, \text{ if }i\not = k \\
    \mu^{(0)*}+\epsilon, \text{ if }i = k
    \end{cases}.
\end{align}
for some $\epsilon>0$. For any event $\mathcal{Q}$, due to the Bretagnolle-Huber inequality \cite{tsybakov2008} we have that
\begin{align*}
    \Pr_{(0)}[\mathcal{Q}] + \Pr_{(1)}[\mathcal{Q}^c] \geq \frac{1}{2} e^{-D\left(\mu^{(0)}||\mu^{(1)}\right)},
\end{align*}
where $\Pr_{(0)}$ (resp. $\Pr_{(1)}$) denotes the probability over the randomness induced by the interplay of the reward distribution with the mean vector $\mu_{(0)}$ (resp. $\mu_{(1)}$) and the policy. 

We define the event $\mathcal{Q}=\left\{N_k(t)>\frac{t}{2}\right\}$. Since $k$ is a suboptimal arm in $\mu_{(0)}$ and optimal in $\mu_{(1)}$, for the regret of policy $\pi$ in the two environments we have that
\begin{align*}
    R^{\pi}_t(\mu^{(0)}) + R^{\pi}_t(\mu^{(1)})
    &\geq \frac{t}{2} \Pr_{(0)}[\mathcal{Q}]\D_k(\mu^{(0)}) + \frac{t}{2}\Pr_{(1)}[\mathcal{Q}^c] \epsilon
\end{align*}
Therefore, we have that
\begin{align*}
    R^{\pi}_t(\mu^{(0)}) + R^{\pi}_t(\mu^{(1)})
    &\geq \frac{t\cdot \min\{ \D_k(\mu^{(0)}),\epsilon\}}{2} \left(\Pr_{(0)}[\mathcal{Q}] + \Pr_{(1)}[\mathcal{Q}^c] \right) \\
    &\geq \frac{t\cdot \min\{ \D_k(\mu^{(0)}),\epsilon\}}{4} e^{-D(\mu^{(0)}||\mu^{(1)})}\\
    &\geq \frac{t\cdot \min\{ \D_k(\mu^{(0)}),\epsilon\}}{4} e^{-\sum_{j:k\in S_j} \Ex{(0)}{N_j(t)} \frac{(\D_k(\mu^{(0)})+\epsilon)^2}{2\s_{j,k}^2}}.
\end{align*}
where in the second to last inequality we used \cref{eq:huber}, while for the last inequality we used \cref{prop:divergence}. By rearranging terms we obtain 
\begin{align}\label{eq:lb1}
    \sum_{j:k\in S_j} \Ex{(0)}{N_j(t)} \frac{(\D_k(\mu^{(0)})+\epsilon)^2}{2\s_{j,k}^2} 
    \geq \log{\frac{t \cdot \min\{ \D_k(\mu^{(0)}),\epsilon\}}{4 (R^{\pi}_t(\mu^{(0)}) + R^{\pi}_t(\mu^{(1)}))}}.
\end{align}

We recall that for any consistent policy $\pi$, for any environment $\mu$ and $p>0$ we have that
\begin{align*}
    \lim_{t\rightarrow \infty} \frac{R_t^{\pi}(\mu)}{n^p} = 0.
\end{align*}

Thus, by rearranging terms in \cref{eq:lb1}, dividing by $\log{t}$ and taking the limit below, we have that
\begin{align}\label{eq:lb2}
&\liminf_{t\rightarrow \infty} \frac{\sum_{i\in[K]} \frac{\Ex{(0)}{N_i(t)}}{\s_{i,k}^2}}{\log{t}} \nonumber\\
&\geq \liminf_{t\rightarrow \infty} \frac{2}{(\D_k(\mu^{(0)})+\epsilon)^2} \frac{\log{\frac{t \cdot \min\{ \D_k(\mu^{(0)}),\epsilon\}}{4 (R^{\pi}_t(\mu^{(0)}) + R^{\pi}_t(\mu^{(1)}))}}}{\log{t}} \nonumber \\
&= \frac{2}{(\D_k(\mu^{(0)})+\epsilon)^2} \liminf_{t\rightarrow \infty}  \frac{
\log{t} 
- \log{ (R^{\pi}_t(\mu^{(0)}) + R^{\pi}_t(\mu^{(1)}))} 
+ \log{\frac{\min\{ \D_k(\mu^{(0)}),\epsilon\}}{4}}}{\log{t}}.
\end{align}
We have that
\begin{align*}
&\liminf_{t\rightarrow \infty}  \frac{
\log{t} 
- \log{ (R^{\pi}_t(\mu^{(0)}) + R^{\pi}_t(\mu^{(1)}))} 
+ \log{\frac{\min\{ \D_k(\mu^{(0)}),\epsilon\}}{4}}}{\log{t}} \\
&\quad \quad= \left(1 -  \liminf_{t\rightarrow \infty}  \frac{\log{ (R^{\pi}_t(\mu^{(0)}) + R^{\pi}_t(\mu^{(1)}))} }{\log{t}}
\right) \\
&\quad \quad=1~.
\end{align*}
Therefore the bound in \cref{eq:lb2} becomes
\begin{align*}
\liminf_{t\rightarrow \infty} \frac{\sum_{i\in[K]} \frac{\Ex{(0)}{N_i(t)}}{\s_{i,k}^2}}{\log{t}}
\geq  \frac{2}{(\D_k(\mu^{(0)})+\epsilon)^2}~.
\end{align*}

By following a similar procedure for all arms, the theorem follows by taking $\epsilon\rightarrow 0$ in the above bound and due to the construction of the objective and constraints of the LP: since the quantity $\frac{\Ex{(0)}{N_i(t)}}{\log{t}}$ needs to satisfy the constraints of the LP asymptotically, then the minimum-regret such solution (which is computed by the LP objective) provides an asymptotic lower bound on the regret.  

\end{proof}


\section{Algorithm and Regret Analysis: Omitted Proofs}
\label{appendix:upper_bound}

\subsection{Proof of \texorpdfstring{\cref{thm:regret_upper_bound}}{}}
Here we prove the main result of \cref{section:Algorithm}:
\thmRegretUpperBound*
\begin{proof}
We define the following events, which correspond to the conditions that are examined in Cases (1) and (2) of \cref{algorithm1}: 
\begin{align*}
    &\Vt = \left\{\left(\frac{N_1(t)}{4a\log{t}},...,\frac{N_K(t)}{4a\log{t}}\right)\in C(\mh_t)\right\},\\
    &\Wt = \left\{\min_{i\in [K]} \nt_i(t) < \frac{1}{K}\beta(n_e(t)) \right\}.
\end{align*}
Moreover, we consider the following error events for the ML estimator $\mh_t$:
\begin{align*}
    &\Ut = \left\{|\mh_i(t) - \mu_i|\leq \sqrt{\frac{2a\log{t}}{\nt_i(t)}}, ~~\forall i\in [K]\right\},\\
    &\Yt = \left\{|\mh_i(t) - \mu_i|\leq \epsilon, ~~\forall i\in [K]\right\}.
\end{align*}
Using the above definitions, we can provide an upper bound on the regret accumulated by \cref{algorithm1} within $T$ rounds as a decomposition of the above events, as follows: 
\begin{align}\label{eq:regret1}
    R_T(\mu) &= T \mu^*  -  \Ex{}{\sum_{t\in[T]} X_{i_t,t}}
    =\Ex{}{\sum_{t\in [T]} \D_{t}(\mu)} \nonumber\\
    &\leq K\D_{\max} + \sum_{t=K+1}^T \Ex{}{\D_{t}(\mu) \left(\event{\Ut^c} + \event{\Ut,\Vt} + \event{\Ut,\Vt^c,\Wt}+ \event{\Ut,\Vt^c,\Wt^c,\Yt}+ \event{\Ut,\Vt^c,\Wt^c,\Yt^c} \right) }.
\end{align}
In the rest of this proof, we focus on each one of the above terms separately.

\medskip
\paragraph{Regret due to $\event{\Ut^c}$.} The regret accumulated due to the event $\event{\Ut^c}$ over $T$ rounds can be bounded by using union bound over all arms and then applying the concentration result of the Maximum-Likelihood estimator proved in \cref{lemma:anytime}. In particular, we obtain that:
\begin{align}\label{eq:regret2}
    \sum_{t=K+1}^{T}\event{\Ut^c} \nonumber
    &\leq \sum_{t=K+1}^T \sum_{i\in[K]} \Pr\left[|\mh_i(t) - \mu_i|> \sqrt{\frac{2a\log{t}}{\nt_i(t)}} \right]\nonumber\\
    &\leq K\sum_{t=K+1}^T 2 t^{1-\alpha/2} \nonumber\\
    &\leq \frac{4K}{\alpha-4}
\end{align}
Therefore, in what follows we assume the the event $\Ut$ holds for every $t\in [T]$.  

\medskip
\paragraph{Regret due to $\event{\Ut,\Vt}$.} As we argued in the description of the algorithm in \cref{section:Algorithm}, when these events hold we show that the greedy arm selection of \cref{algorithm1} in Case (1) leads to the selection of the optimal arm. 
By using the definitions of the events $\Ut$ and $\Vt$ we have that the error in the estimate of the mean of any arm $i\in [K]$ that is suboptimal in the vector of estimations $\mh(t)$ can be upper bounded as:
\begin{align*}
    |\mh_i(t) - \mu_i| \leq \sqrt{\frac{2a\log{t}}{\nt_i(t)}} \leq \sqrt{\frac{2a\log{t}}{\frac{8\alpha\log{t}}{\D_i^2(\mh(t))}}} = \frac{\D_i(\mh(t))}{2}, ~~ \forall i\not = i^*(\mh(t)).
\end{align*}
In particular, this implies that for any suboptimal arm $i$ of $\mh(t)$, we have that
\begin{align}\label{eq:ineq1}
    \mu_i - \mh_i(t) \leq \frac{\D_i(\mh(t))}{2},
\end{align}
while, similarly, for the optimal arm of $\mh(t)$, $i^*(\mh(t))$, we have
\begin{align}\label{eq:ineq2}
    \mh_{i^*(\mh(t))}(t) - \mu_{i^*(\mh(t))} \leq \frac{\D_{\min}(\mh(t))}{2}.
\end{align}
By adding \cref{eq:ineq1} and \cref{eq:ineq2} we get that:
\begin{align*}
    \mu_i - \mh_i(t) + \mh_{i^*(\mh(t))}(t) - \mu_{i^*(\mh(t))} \leq \frac{\D_{i}(\mh(t))}{2} + \frac{\D_{\min}(\mh(t))}{2} \leq \D_{i}(\mh(t)),
\end{align*}
which, by eliminating $\D_{i}(\mh(t))$ and $- \mh_i(t) + \mh_{i^*(\mh(t))}(t) - \mu_{i^*(\mh(t))}$ from both sides, gives that the regret of any arm $i$ that is suboptimal in $\mh(t)$ satisfies: $\mu_i\leq \mu_{i^*(\mh(t))}$. Therefore the optimal arm of the vector of estimates at time $t$, $i^*(\mh(t))$, corresponds to the optimal arm of vector $\mu$ and
\begin{align}\label{eq:regret3}
    \D_t(\mu)\event{\Ut,\Vt}=0
\end{align}
thus, the rounds where the events $\Ut,\Vt$ hold do not contribute to the regret. 

\medskip
\paragraph{Regret due to $\event{\Ut,\Vt^c,\Wt}$.}
We focus on the regret accumulated during the rounds where $\Ut,\Vt^c,\Wt$ holds. These correspond to the the rounds where  \cref{algorithm1} is attempting to ensure uniform exploration for all arms in terms of their weighted numbers of samples. 
The term of the regret depending on $\event{\Ut,\Vt^c,\Wt}$ can be bounded as follows: First we develop the following counting argument for the number of times the playing satisfies the events $\Vt^c,\Wt$ compared to the total number of ``exploration rounds'', i.e. the number of times the algorithm enters Case (2) or Case (3):

\lemmaCounting*
\begin{proof}
Before we prove this result we show the following proposition. This proposition states that if an algorithm satisfies  $\{\Vt^c,\Wt\}$ (which corresponds to Case (2) of \cref{algorithm1}) at least $K$ times, then the minimum weighted number of samples, $\nt_i(t)$, for any arm $i\in [K]$ after these rounds, will have increased by at least $\frac{1}{\sm^2}$. 
\begin{proposition}\label{prop:pigeonhole}
Let $t_2>t_1>K$, then if  $\sum_{t=t_1}^{t_2-1} \event{\Vt^c,\Wt}\geq K$ then $$\min_{i\in [K]}\nt_i(t_2) \geq \min_{i\in [K]}\nt_i(t_1) + \frac{1}{\sm^2}.$$
\end{proposition}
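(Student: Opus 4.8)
The plan is to argue by a pigeonhole-style counting on which arm attains the minimum weighted sample count. Suppose, within the rounds $t \in \{t_1, \ldots, t_2-1\}$, the event $\{\Vt^c, \Wt\}$ holds at least $K$ times; call these ``Case (B) rounds''. In each such round $t$, \cref{algorithm1} identifies an arm $j = \arg\min_{k\in[K]}\nt_k(t)$ and plays the arm $i_t = \arg\min_{k\in[K]}\s_{k,j}^2$, i.e., the source that provides the least noisy information about $j$. The effect of playing $i_t$ is that $N_{i_t}$ increments by one, and hence, for \emph{every} arm $m$, the weighted count $\nt_m$ increases by exactly $1/\s_{i_t,m}^2$. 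In particular, for the targeted arm $j$, the increase is $1/\s_{i_t,j}^2 = 1/(\sigma^{\min}_j)^2 \geq 1/\sm^2$, by definition of $\sigma^{\min}_j$ and $\sm = \max_i \sigma^{\min}_i$. So each Case (B) round raises $\nt_j$ of the current arg-min arm $j$ by at least $1/\sm^2$. Also note that, since the algorithm only ever adds plays, $\nt_m(t)$ is nondecreasing in $t$ for every arm $m$.

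Now I would make the key observation that drives the counting: let $v = \min_{i\in[K]}\nt_i(t_1)$ be the starting value of the minimum weighted count. As long as $\min_{i\in[K]}\nt_i(t) = v$ still holds at some Case (B) round $t$, the arm $j$ chosen in that round has $\nt_j(t) = v$, and after the play $\nt_j$ becomes at least $v + 1/\sm^2$. Thus any particular arm $i$ can be the selected arg-min arm $j$ in at most \emph{one} Case (B) round while the running minimum is still equal to $v$: after being selected once, its weighted count strictly exceeds $v$ (by at least $1/\sm^2$) and, being nondecreasing, never returns to $v$. Hence, across all Case (B) rounds during which the running minimum equals $v$, at most $K$ distinct arms can be ``serviced'', so there are at most $K-1$ such rounds before the running minimum must strictly exceed $v$ — but we assumed there are at least $K$ Case (B) rounds, so at some round $t' \le t_2 - 1$ the minimum has already become at least $v + 1/\sm^2$. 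Since the weighted counts are nondecreasing, $\min_{i\in[K]}\nt_i(t_2) \ge \min_{i\in[K]}\nt_i(t')\ge v + 1/\sm^2 = \min_{i\in[K]}\nt_i(t_1) + 1/\sm^2$, which is the claim.

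The one subtlety to handle carefully — and the step I expect to be the main obstacle — is making the ``at most one Case (B) round per arm while the minimum equals $v$'' argument airtight in the presence of \emph{ties} in the arg-min (the algorithm breaks ties arbitrarily) and of the fact that non-Case-(B) rounds interleaved in $\{t_1,\dots,t_2-1\}$ may also increment weighted counts (only ever increasing them, which is harmless). The cleanest way is: track the quantity $M(t) := \min_{i\in[K]}\nt_i(t)$; it is nondecreasing; partition the Case (B) rounds by the value of $M(t)$ at the start of the round; within the block where $M(t) = v$, each Case (B) round strictly removes one arm from the arg-min set $\{i : \nt_i(t) = v\}$ (the serviced arm leaves and, since counts only increase, never rejoins), and the block ends once this set is empty, i.e., after at most $K$ rounds — in fact once it empties $M$ jumps above $v$. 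With $\ge K$ total Case (B) rounds we exit the first block, giving the desired $1/\sm^2$ increment. I would write this as a short induction on the Case (B) rounds, or equivalently invoke the set $\{i:\nt_i(t)=v\}$ shrinking by one each time.
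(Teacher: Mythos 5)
Your overall strategy (each Case (B) round services the current arg-min arm and boosts its weighted count by at least $1/\sm^2$; counts are nondecreasing; pigeonhole over $K$ arms) is the right one and matches the spirit of the paper's proof, but the step you yourself flagged as delicate is where the argument, as written, breaks. Emptying the set $\{i : \nt_i(t) = v\}$ only guarantees that the running minimum \emph{strictly exceeds} $v$; it does not give the claimed jump to $v + 1/\sm^2$. The arms you never account for are those whose counts at $t_1$ lie strictly between $v$ and $v + 1/\sm^2$: they are never members of the $v$-level arg-min set, so your ``first block'' never forces them to be serviced, and once the block ends one of them can hold the minimum at a value arbitrarily close to $v$. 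Concretely, take $K=2$ with diagonal feedback $\s_{j,j}=\sm$, $\s_{i,j}=\infty$ for $i\neq j$, and $\nt_1(t_1)=v$, $\nt_2(t_1)=v+\delta$ with $0<\delta<1/\sm^2$: the block where the minimum equals $v$ ends after a single Case (B) round, at which point the minimum is only $v+\delta$, yet your final sentence declares ``the desired $1/\sm^2$ increment'' at exactly that moment. (The proposition is still true here, because the remaining Case (B) rounds will then target arm $2$, but your argument never uses them --- it stops at the exit of the first block.) The same slip appears in the bookkeeping ``at most $K-1$ such rounds'': when all $K$ arms start at level $v$ the block can contain $K$ Case (B) rounds, and that is precisely the case where exiting the block does give the increment.

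The fix is small and stays within your framework: track the set $S(t)=\{i : \nt_i(t) < v + 1/\sm^2\}$ instead of the exact-$v$ arg-min set. At any Case (B) round, either the current minimum is already at least $v+1/\sm^2$ (and monotonicity finishes the proof), or the serviced arm belongs to $S(t)$, and after the play its count is at least (current minimum) $+\,1/\sm^2 \geq v + 1/\sm^2$, so it leaves $S$ permanently. Since $|S(t_1)|\leq K$, having at least $K$ Case (B) rounds in $\{t_1,\dots,t_2-1\}$ empties $S$ by time $t_2$, which is the claim. Equivalently, you can argue the dichotomy: either all $K$ arms are serviced at least once (each then sits at $\geq v+1/\sm^2$ forever after), or some arm is serviced twice, and at its second servicing it is the arg-min with count already $\geq v+1/\sm^2$, so the minimum has reached that level. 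Either repair is essentially the pigeonhole the paper uses; with it, your write-up is correct.
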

\begin{proof}
    This proposition follows from a simple pigeonhole argument: for at least $K$ consecutive rounds an arm with minimum weighted number of samples collected $i_t=\min_{i\in [K]}\nt_i(t)$ is being identified and subsequently the algorithm chooses an arm $j$ such that $\frac{1}{\s_{j,i_t}^2}$ is the maximum possible over $j\in [K]$ for the arm $i_t$. Suppose that there exists an arm $j\in[K]$ with minimum weighted number of samples at time $t_1$, i.e. $\nt_{j}(t_1) = \min_{i\in[K]}\nt_{i}(t_1)$ that was not explored during these (at least $K$) rounds. Then, this means that other arms with minimum weighted number of samples were explored during these rounds. Since the arms are $K$, and the exploration step in Case (2) of the \cref{algorithm1} is defined such that the weighted number of samples strictly increases for the arm that is currently being explored, the assumption that $j$ was not explored during these rounds leads to a contradiction. 
    Moreover, by definition of $\sm$ the weighted number of samples for any such arm $i_t$ will have increased by at least $\frac{1}{\s_{j,i}^2}\geq \frac{1}{\sm^2}$. Thus, at the end of at least $K$ such rounds, the minimum $\nt_i(t)$ over $i\in [K]$ is guaranteed to increase by at least $\frac{1}{\sm^2}$.
\end{proof}

Now let $t'$ be the last round where the event $\{\Vt^c,\Wt\}$ holds, i.e.:
\begin{align*}
    t' = \max\{t : \event{\Vt^c,\Wt}, K \leq t\leq T\}.
\end{align*}
Due to \cref{prop:pigeonhole}, we have that at time $t'$ the minimum value of the weighted number of samples of any arm $i\in[K]$ satisfies:
\begin{align*}
    \min_{i\in [K]}\nt_i(t') \geq \frac{1}{K}\sum_{t=K+1}^{t'} \frac{\event{\Vt^c,\Wt}}{\sm^2} .
\end{align*}
By rearranging terms in the above and using that when $\{\Vt^c,\Wt\}$ holds we have that $\min_{i\in [K]}\nt_i(t') < \frac{1}{K}\beta\left({n_e(t')}\right)$ we obtain the following bound on the number of times that the event $\{\Vt^c,\Wt\}$ occurs over $T$ rounds:
\begin{align}
    \sum_{t=K+1}^{T} \event{\Vt^c,\Wt}  
    &\leq \sum_{t=K+1}^{t'} \event{\Vt^c,\Wt} +1 \nonumber\\
    &\leq K\sm^2\min_{i\in [K]}\nt_i(t') +1 \nonumber\\
    &< \sm^2\beta\left({n_e(t')}\right) +1. \label{eq:betas}
\end{align}

Now we use that $n_e(t')=\sum_{t=K+1}^{t'} \event{\Vt^c}$ and $t'\leq T$: 
\begin{align*}
    \cref{eq:betas} = \sm^2\beta\left(\sum_{t=K+1}^{t'} \event{\Vt^c}\right) +1 \leq \sm^2\beta\left(\sum_{t=K+1}^{T} \event{\Vt^c}\right) +1.
\end{align*}

Then using the definition of $\beta(x)=\frac{x^{\gamma}}{2}$ we obtain that: 

\begin{align*}
    \cref{eq:betas}\leq \frac{\sm^2}{2} \frac{\left(\sum_{t=K+1}^T \event{\Vt^c}\right)^{\gamma}}{\sm^{2}} +1 = \frac{\left(\sum_{t=K+1}^T \event{\Vt^c}\right)^{\gamma}}{2} + 1.
\end{align*}

\end{proof}

Equipped with \cref{lemma:counting} and using a decomposition of the event $\Vt^c$ we can bound the term of the regret depending on $\event{\Ut,\Vt^c,\Wt}$ as follows:
\begin{align*}
    &\sum_{t=K+1}^T \event{\Ut,\Vt^c,\Wt} \\
    &\leq \sum_{t=K+1}^T \event{\Vt^c,\Wt}\\
    &\leq \frac{\left(\sum_{t=K+1}^T \event{\Vt^c}\right)^{\gamma}}{2} + 1\\
    &\leq \frac{\left(\sum_{t=K+1}^T \event{\Ut^c} + \event{\Ut,\Vt^c,\Wt}+ \event{\Ut,\Vt^c,\Wt^c,\Yt}+ \event{\Ut,\Vt^c,\Wt^c,\Yt^c} \right)^{\gamma}}{2} + 1\\
    &\leq \frac{1}{2}\sum_{t=K+1}^T \left(\event{\Ut^c}
    + \event{\Ut,\Vt^c,\Wt}
    + \event{\Ut,\Vt^c,\Wt^c,\Yt^c}\right)
    + \frac{\left(\sum_{t=K+1}^T \event{\Ut,\Vt^c,\Wt^c,\Yt}\right)^{\gamma}}{2}
    +1,
\end{align*}
where the last inequality comes from the assumption that $\beta(.)$ is a sub-additive function when when its arguments are larger than $1$ (a fact that we can assume that trivially holds, otherwise this implies that the regret in these rounds is $0$). By rearranging and grouping terms we obtain the following bound:
\begin{align}\label{eq:regret4}
    \sum_{t=K+1}^T \event{\Ut,\Vt^c,\Wt} \leq \sum_{t=K+1}^T \left(\event{\Ut^c}
    + \event{\Ut,\Vt^c,\Wt^c,\Yt^c}\right)
    + \left(\sum_{t=K+1}^T \event{\Ut,\Vt^c,\Wt^c,\Yt}\right)^{\gamma}
    + 2.
\end{align}
Thus, in order to bound the regret in the case where $\Ut,\Vt^c,\Wt^c$ hold, we need to bound the regret accumulated due to the terms $\event{\Ut,\Vt^c,\Wt^c,\Yt^c}$ and $\event{\Ut,\Vt^c,\Wt^c,\Yt}$. 

\medskip
\paragraph{Regret due to $\event{\Ut,\Vt^c,\Wt^c,\Yt}$.}
When the event $\Vt^c,\Wt^c$ holds the constraints of the currently estimated LP are not satisfied. In this case, the algorithm selects an arm $j$ that violates these constraints, i.e. $\frac{N_j(t)}{4a\log{t}} < c^*_j(\mh_t)$. In addition, we recall that the event $\Yt$ implies that the error in all current mean estimates is less that $\epsilon$, i.e. $|\mh_{j,t} - \mu_j|\leq \epsilon$. For the case where all estimates are $\epsilon$-close to the original values, we can construct an $\epsilon$-approximate LP. Therefore, in this case we are able to use the worst-case $\epsilon$-approximate LP solution introduced in \cref{def:epsilonLP}:
\begin{align*}
    c^*_j(\mu,\epsilon) = \sup_{|\mu'_i-\mu_i|\leq \epsilon, \forall i} c^*_j(\mu').
\end{align*}

\lemmaDominantTerm* 
\begin{proof}
Let $t'$ be the last round such that $t'\leq T$ where $\{\Ut,\Vt^c,\Wt^c,\Yt\}$ holds and the algorithm selects arm $j$. Then we have that:
\begin{align*}
    \sum_{t=K+1}^T \event{\Ut,\Vt^c,\Wt^c,\Yt, i_t=j} &\leq \sum_{t=K+1}^{t'} \event{\Ut,\Vt^c,\Wt^c,\Yt, i_t=j} \\
    &\leq N_j(t') \\
    &\leq c^*_j(\mh_{t'}) 4\alpha \log{t'} \\
    &\leq c^*_j(\mu,\epsilon) 4\alpha \log{T}, 
\end{align*}
where the third to last inequality comes from the fact that $\frac{N_j(t)}{4a\log{t}} < c^*_j(\mh_t)$. The second to last is due to $t'\leq T$ and the last inequality follows from \cref{def:epsilonLP} and the fact that $\mh_{i,t'}$ is $\epsilon-$close to $\mu_i$ for every $i\in [K]$. 
\end{proof}

Using the same idea for all arms, we bound the term of the regret depending on $\event{\Ut,\Vt^c,\Wt^c,\Yt}$ as follows:
\begin{align}\label{eq:regret5}
    \sum_{t=K+1}^T \D_t \event{\Ut,\Vt^c,\Wt^c,\Yt} 
    &\leq \sum_{j\in[K]} \D_j(\mu) \event{\Ut,\Vt^c,\Wt^c,\Yt, i_t=j} \nonumber\\
    &\leq 4\alpha \sum_{j\in[K]} \D_j(\mu) c^*_j(\mu,\epsilon)  \log{T} .
\end{align}

\medskip
\paragraph{Regret due to $\event{\Ut,\Vt^c,\Wt^c,\Yt^c}$.}

We can bound the term of the regret due to the event $\event{\Ut,\Vt^c,\Wt^c,\Yt^c}$ as follows:
\begin{align}
    &\Ex{}{\sum_{t=K+1}^T \D_t \event{\Ut,\Vt^c,\Wt^c,\Yt^c}}\nonumber\\ 
    &\leq \D_{\max} \Ex{}{\sum_{t=K+1}^T \event{\Ut,\Vt^c,\Wt^c,\Yt^c}}\nonumber\\
    &= \D_{\max} \Ex{}{\sum_{t=K+1}^T \event{~\Ut,~\Vt^c, ~\min_{j\in [K]} \nt_j(t) \geq \frac{1}{K}\beta\left(n_e(t)\right),  ~\exists i\in [K]: |\mh_i(t) - \mu_i|> \epsilon~}}\nonumber\\
    &\leq \D_{\max} \Ex{}{\sum_{i\in[K]}\sum_{t=K+1}^T \event{~\Ut,~\Vt^c, ~\min_{j\in [K]} \nt_j(t) \geq \frac{1}{K}\beta\left(n_e(t)\right),  |\mh_i(t) - \mu_i|> \epsilon~}} \label{eq:sum_stopings} .
\end{align}
where in the last inequality above we union bounded on the events of error in the estimates of any arm. 


Finally, for better manipulation of the bound in \cref{eq:sum_stopings}, we construct the following sets :
\begin{align*}
    \Lambda=\{t\in [T] : \Ut,\Vt^c, \min_{i\in[K]}\nt_i(t) \geq \frac{1}{K}\beta\left(n_e(t)\right)\}
\end{align*}
and the more refined
\begin{align*}
    \Lambda(\tau)= \{t\in[T]:\Ut,\Vt^c, \min_{i\in[K]}\nt_i(t) \geq \frac{1}{K}\beta\left(\tau\right), n_e(t)=\tau\}
\end{align*}
Note that if $t\in \Lambda(\tau)$ then $\nt_i(t) \geq \frac{1}{K}\beta\left(n_e(t)\right)$ and that $|\Lambda(\tau)|\leq 1$. In addition, $\Lambda \subseteq
\cup_{\tau\in [T]}\Lambda(\tau)$. 
Then, let $\phi_{\tau}$ be a stopping time such that $\phi_{\tau} = t$ if $\Lambda(\tau) = \{t\}$ and $\phi_{\tau} = T+1$ otherwise. We can write that:
    \begin{align*}
        \Ex{}{\sum_{t\in[T]} \event{t\in \Lambda, |\mh_i(t) - \mu_i|> \epsilon}} 
        &\leq \Ex{}{\sum_{\tau\in[T]} \event{\phi_{\tau}\leq T, |\mh_i(t) - \mu_i|> \epsilon}} \\
        &= \sum_{\tau\in[T]} \Pr\left[\phi_{\tau}\leq T, |\mh_i(t) - \mu_i|> \epsilon \right] \\
        &\leq \sum_{\tau\in[T]} 2 \exp\left(-\frac{\epsilon^2}{2} \frac{1}{K}\beta(\tau)\right),
    \end{align*}
where in the last inequality we apply the result of \cref{lemma:martingale}.

Therefore, \cref{eq:sum_stopings} can be bounded as follows:

\begin{align}\label{eq:regret6}
    \cref{eq:sum_stopings} 
    &= \D_{\max} \Ex{}{\sum_{i\in[K]}\sum_{t=K+1}^T \event{t\in\Lambda, |\mh_i(t) - \mu_i|> \epsilon~}} \nonumber\\
    &\leq \D_{\max} \sum_{i\in[K]} \sum_{\tau\in[T]}  \exp\left(-\frac{\beta(\tau) \epsilon^2}{2K}\right)\\
    &= \D_{\max} \sum_{i\in[K]} \sum_{\tau\in[T]}  \exp\left(-\frac{\tau^{\gamma} \epsilon^2}{4K\sm^2}\right).
\end{align}
where in the last inequality we used the definition of $\beta(\cdot)$ function. 

Combining the bounds in \cref{eq:regret1,eq:regret2,eq:regret3,eq:regret4,eq:regret5,eq:regret6} we conclude that the regret of \cref{algorithm1} can be upper bounded by:
\begin{align*}
    R_T(\mu)
    &\leq K\D_{\max} + \D_{\max}\frac{8K}{\alpha-4}+ 
    \D_{\max} \left(4\alpha \sum_{j\in[K]} \D_j(\mu) c^*_j(\mu,\epsilon)  \log{T} \right)^{\gamma}
    \\
    &+ 2\D_{\max} K \sum_{\tau\in[T]}  \exp\left(-\frac{\tau^{\gamma} \epsilon^2}{4K\sm^2}\right) + 4\alpha \sum_{j\in[K]} \D_j(\mu) c^*_j(\mu,\epsilon)  \log{T} + 2.
\end{align*}

\end{proof}

\subsection{Proof of \texorpdfstring{\cref{cor:asymptotic_optimality}}{}}

\corAsymptoticOptimality*

\begin{proof}

Recall that:
\begin{align*}
    R_T(\mu)
    &\leq \left(2K + \frac{8K}{\alpha-4} + 2\right)\D_{\max} + 2\D_{\max} K \sum_{\tau\in[T]}  \exp\left(-\frac{\tau^{\gamma} \epsilon^2}{4K\sm^2}\right)\\
    &+\D_{\max}\left(4\alpha \sum_{j\in[K]} c^*_j(\mu,\epsilon)  \log{T} +K \right)^{\gamma} + 4\alpha \sum_{j\in[K]} \D_j(\mu) c^*_j(\mu,\epsilon)  \log{T}.
\end{align*}
Observe that for any $\epsilon, \sm >0$ and $\gamma\in(0,1)$ we have that $\sum_{\tau\in[T]}  \exp\left(-\frac{\tau^{\gamma} \epsilon^2}{4K\sm^2}\right)<\infty$.
Moreover, the term: $$\left(4\alpha \sum_{j\in[K]} c^*_j(\mu,\epsilon)  \log{T} +K \right)^{\gamma} \in o\left(4\alpha \sum_{j\in[K]} c^*_j(\mu,\epsilon)  \log{T} +K \right)$$ and thus vanishes asymptotically. Therefore, 
\begin{align*}
    \limsup_{T\rightarrow \infty} \frac{R_T(\mu)}{\log{T}} = 4\alpha \sum_{j\in[K]} \D_j(\mu) c^*_j(\mu,\epsilon). 
\end{align*}

\end{proof}

\end{document}